\documentclass[aos, preprint]{imsart}
\RequirePackage[OT1]{fontenc}
\RequirePackage{amsthm,amsmath,amsfonts}
\RequirePackage[numbers]{natbib}
\RequirePackage[colorlinks,citecolor=blue,urlcolor=blue]{hyperref}
\usepackage{algorithm}
\usepackage{graphicx}
\usepackage{multirow}
\usepackage{pdflscape}
\usepackage{enumitem}
\usepackage{tabularx}
\usepackage{rotating}
\usepackage{bbm}
\usepackage{xr}


\newcommand{\bR}{\mathbb R}
\newcommand{\bP}{\mathbb {P}}
\newcommand{\bE}{\mathbb {E}}
\newcommand{\bV}{\mathbb {V}}

\newcommand{\mM}{\mathcal M}

\newcommand{\fa}{{\mathbf{ a}}}
\newcommand{\fb}{{\boldsymbol{ b}}}

\newcommand{\fd}{{\boldsymbol{ d}}}
\newcommand{\fe}{{\mathbf{ e}}}

\newcommand{\fr}{{\boldsymbol{ r}}}

\newcommand{\fv}{{ \boldsymbol{\nu}}}

\newcommand{\fx}{{\mathbf{ x}}}
\newcommand{\fy}{{\mathbf{ y}}}

\newcommand{\fD}{{       { D}}}

\newcommand{\fI}{{       { I}}}

\newcommand{\fL}{{       { L}}}

\newcommand{\fR}{{       { R}}}
\newcommand{\fS}{{       { S}}}

\newcommand{\fU}{{       { U}}}

\newcommand{\fW}{{       { W}}}
\newcommand{\fX}{{       { X}}}

\newcommand{\fSigma}{{ \Sigma}}
\newcommand{\fzero}{\mathbf{0}}
\newcommand{\mA}{\mathcal A}
\newcommand{\mJ}{\mathcal J}

\newcommand{\mhA}{\tilde {\mathcal A}}
\newcommand{\mAo}{\mathcal A}

\newcommand{\mG}{\mathcal G}

\newcommand{\mB}{\mathcal B}

\newcommand{\Psic}{\Psi^{\bot}}

\newcommand{\hbeta}{{\hat  \fbeta}}
\newcommand{\hbetaat}[1]{{\hat  \fbeta_{\tilde \mA{#1}}}}
\newcommand{\hbetact}[1]{{\hat  \fbeta_{\tilde \mA^c{#1}}}}

\newcommand{\hbetag}{{\hat  \fbeta_{\ftau\mG}}}
\newcommand{\hbetagc}{{\hat  \fbeta_{\ftau\mG^c}}}

\newcommand{\fbetao}{{\fbeta^*}}
\newcommand{\fbetaat}{{\fbeta^*_{\tilde \mA}}}
\newcommand{\fbetact}{{\fbeta^*_{\tilde \mA^c}}}

\newcommand{\fbetag}{{\fbeta_{0,\mG}}}

\newcommand{\fbetaobb}{{\fbeta_{0,\mB_2}}}

\newcommand{\fSaa}{{       {\fS}_{\mhA\mhA}}}

\newcommand{\fSca}{{       {\fS}_{\mhA^c \mhA}}}
\newcommand{\fScc}{{       {\fS}_{\mhA^c\mhA^c}}}

\newcommand{\fSpsiaa}{{       {\Psi \fS \Psi}}}
\newcommand{\fSpsiac}{{       {\Psi \fS \Psic}}}
\newcommand{\fSpsica}{{       {\Psic \fS \Psi}}}
\newcommand{\fSpsicc}{{       {\Psic \fS \Psic}}}

\newcommand{\fapa}{{       {\Psi \fa}}}
\newcommand{\fapc}{{       {\Psic \fa}}}

\newcommand{\fSigmaaa}{{  { \hat \Sigma}_{\mhA\mhA}}}
\newcommand{\ifSigmaaa}{{ { \hat \Sigma}^{-1}_{\mhA\mhA}}}

\newcommand{\fSigmacc}{{  { \hat \Sigma}_{\mhA^c\mhA^c}}}
\newcommand{\ifSigmacc}{{ { \hat \Sigma}^{-1}_{\mhA^c\mhA^c}}}

\newcommand{\fSigmagg}{  { \hat \Sigma_{\mG\mG}}}
\newcommand{\fSigmalsl}{ { \hat \Sigma_{LSL}}}

\newcommand{\ftaua}{{             \ftau_{a}}}
\newcommand{\ftauc}{{             \ftau_{c}}}

\newcommand{\fkappa}{{\boldsymbol \kappa}}
\newcommand{\fbeta}{{ \boldsymbol   \beta}}
\newcommand{\ftau}{{ \boldsymbol \tau}}

\newcommand{\fepsilon}{{\boldsymbol  \epsilon}}

\newcommand{\supnorm}[1]{{ \| {#1} \|_{\infty}}}
\newcommand{\onenorm}[2]{{ \| {#1} \|_{1}^{#2}}}
\newcommand{\twonorm}[2]{{ \| {#1} \|_{2}^{#2}}}

\newcommand{\maxnorm}[1]{{  | {#1}  |_{\infty}}}

\newcommand{\tr}[1]{ \text{tr} ( {#1} )}
\newcommand{\fnormt}[2]{{ \| {#1} \|_{F}^{#2}}}
\newcommand{\fSigmatau}[1]{  { \hat \Sigma}_{#1}}
\newcommand{\ifSigmatau}[1]{ { \hat \Sigma}^{-1}_{#1}}

\newcommand{\hbetatau}[1]{{ \hat \fbeta_{#1} }}

\newcommand{\rhomin}[1]{{ \rho^{+}_{\text{min}}{(#1)}}}
\newcommand{\rhomax}[1]{{ \rho^{+}_{\text{max}}{(#1)}}}

\newcommand{\hL}{{\fL}}
\newcommand{\hLpaa}[1]{ \fL_{\Psi\Psi}^{#1}}
\newcommand{\hLpcc}[1]{{ \fL_{\Psic\Psic}^{#1}}}
\newcommand{\hLpca}[1]{{ \fL_{\Psic\Psi}^{#1}}}

\newcommand{\hLs}[1]{ \fL_*^{#1}}


\startlocaldefs
\numberwithin{equation}{section}
\theoremstyle{plain}
\newtheorem{thm}{Theorem}[section]
\newtheorem{lemma}{Lemma}[section]
\newtheorem{coro}{Corollary}[section]
\newtheorem{assp}{Assumption}[]

\endlocaldefs
\begin{document}

\begin{frontmatter}
\title{Method Of Contraction-Expansion (MOCE) for Simultaneous Inference in Linear Models}
\runtitle{MOCE}

\begin{aug}
	\author{\fnms{Fei}
        \snm{Wang}\thanksref{m1}\ead[label=e1]{fwang@cargurus.com}},
        \author{\fnms{Ling}
        \snm{Zhou}\thanksref{m2}\ead[label=e2]{zhouling@swufe.edu.cn}},
\author{\fnms{Lu} \snm{Tang}\thanksref{m3}\ead[label=e3]{lutang@pitt.edu}},
\and
\author{\fnms{Peter X.-K.} \snm{Song}\thanksref{m4}\ead[label=e4]{pxsong@umich.edu}}


\affiliation{CarGurus\thanksmark{m1} and Southwestern University of Finance and
Economics\thanksmark{m2} and University of Pittsburgh \thanksmark{m3} and University of Michigan\thanksmark{m4}}

\address{CarGurus\\
  Cambridge, MA, U.S.A\\
\printead{e1}\\
}
\address{Southwestern University of Finance and Economics\\
Chengdu, China\\
\printead{e2}\\
}
\address{University of Pittsburgh\\
  Pittsburgh, PA, U.S.A\\
\printead{e3}}

\address{University of Michigan\\
  Ann Arbor, MI, U.S.A\\
\printead{e4}}
\end{aug}

\begin{abstract}
Simultaneous inference after model selection is of critical importance to
address scientific hypotheses involving a set of parameters.  In this paper, we
consider high-dimensional linear regression model in which a regularization
procedure such as LASSO is applied to yield a sparse model. To establish a
simultaneous post-model selection inference, we propose a method of contraction
and expansion (MOCE) along the line of debiasing estimation that enables us to
balance the bias-and-variance trade-off so that the super-sparsity assumption
may be relaxed. We establish key theoretical results for the proposed MOCE
procedure from which the expanded model can be selected with theoretical
guarantees and simultaneous confidence regions can be constructed by the joint
asymptotic normal distribution. In comparison with existing methods, our
proposed method exhibits stable and reliable coverage at a nominal significance
level with substantially less computational burden, and thus it is trustworthy for its
application in solving real-world problems.
\end{abstract}

\begin{keyword}[class=MSC]
\kwd[Primary ]{60K35}
\kwd{60K35}
\kwd[; secondary ]{60K35}
\end{keyword}

\begin{keyword}
\kwd{Confidence region}
\kwd{high-dimensional}
\kwd{LASSO}
\kwd{ridge penalty}
\kwd{weak signal}
\end{keyword}

\end{frontmatter}

\section{Introduction} \label{sec:intro}
We consider the linear model with a response vector $\fy=(y_1,...,y_n)^T$ and
an $n \times p$ design matrix $\fX$,
\begin{equation}
	\fy = \fX \fbetao + \fepsilon,
	\label{linear.model}
\end{equation}
where  $\fbeta^* = (\beta_1^*,\cdots,\beta_p^*)^T \in \mathbb{R}^{p}$ denotes a
$p$-dimensional vector of unknown true regression coefficients, and
$\fepsilon=(\epsilon_1,\dots,\epsilon_n)^T$ is an $n$-dimensional vector of
$i.i.d.$ random errors with mean zero and variance $\sigma^2\fI_n$, where $\fI_n$ is the
$n \times n$ identity matrix. All columns
in $\fX$ are normalized to have mean zero and $\ell_2$-norm 1. The sample covariance matrix of $p$ predictors and its corresponding population covariance matrix are denoted by $\fS =
 \frac{1}{n} \fX^T \fX$ and $\fSigma$, respectively. Let
$\mAo =\{j: \beta_j^* \neq 0, j=1,\dots,p \}$ be the support of $\fbetao$ with
cardinality $a = |\mAo|$. In this paper, assuming $p \rightarrow \infty$ as
$n\rightarrow \infty$, we focus on simultaneous statistical
inferences on a certain parameter subset of $\fbetao$ when $p \gg n$.

Arguably, in the setting of $p \gg n$, a simultaneous inference for the entire set of $p$ parameters, i.e.
$\fbetao$, is generally not tractable due to the issue of model identification.
A key assumption
widely adopted in the current literature to facilitate statistical inference is the
sparsity of $\fbetao$, namely $a\ll n$, in addition to regularity conditions on the
design matrix; see for example \cite{RSSB12094, vandegeer2014, zhangcun2014},
among others. The sparsity assumption of the true signals necessitates variable selection, which has been extensively studied in the
past two decades or so. Being one of the most celebrated variable selection
methods, Least Absolute Shrinkage and Selection Operator (LASSO)\cite[]{tibshirani94} has gained great popularity in
both theory and applications. Specifically, a LASSO estimator is obtained by
minimizing the following penalized objective function:
\begin{align}
	\hbeta_{\lambda} =\arg\underset{\fbeta \in \bR^p}{\min}\Big (\frac{1}{2n} \twonorm{\fy-\fX\fbeta }{2} + \lambda \onenorm{\fbeta}{} \Big ),
	\label{LASSO.est}
\end{align}
where $\onenorm{\cdot}{}$ is the $\ell_1$-norm of a vector and $\lambda>0$ is
the tuning parameter. Based on this LASSO estimator, $\hbeta_{\lambda}$, given in
\eqref{LASSO.est}, statistical inferences for parameters in $\fbetao$ in the
aspects of hypothesis test and confidence region construction have recently
received a surge of attention in the literature because statistical inference
has been always playing a central role in the statistical theory and providing
one of the most effective ways for the transition of data to knowledge.

Some progresses in post-model selection inferences have been reported in the
literature. The method LASSO+mLS proposed in \cite{liu2013} first performs
LASSO model selection and then draws statistical inferences based on the
selected model.  This approach requires model selection consistency and some
incoherence conditions on the design matrix
\citep{yu2006,meinshausen2009,Buhlmann2011}.  Inference procedures built upon
those conditions have been noted as being impractical and exhibited poor
performances due to the lack of uniform validity of inferential precedures over
sequences of models; see for example, \cite{leeb2008sparse,
chernozhukov2014valid}.

To overcome the reliance on  the oracle asymptotic distribution in inference,
many solutions have been proposed in recent years. Among those, three methods
are so far known for a valid post-model selection inference. (i) The first kind is sample splitting
method \cite{wasserman2009, noicolai2009, meinshausen2010stability} and
resampling method \cite{minnier2012perturbation}.
A key drawback of the sample splitting method is its requirement of the beta-min
assumption, while the resampling approach entails a strong restrictive
exchangeability condition on the design matrix. (ii) The second kind is group inference proposed in
\cite{RSSB12094}.  Unfortunately, this approach fails to show desirable power
to detect individual signals, and thus it is not useful in practical studies.
(iii) The third kind is low-dimensional projection (LDP) \cite{zhangcun2014, vandegeer2014,
java2014}. Such inferential method is rooted in a seminal idea of
debiasing, resulting from the use of penalized objective function that causes estimation shrinkage. This method will be adopted in this paper for a new paradigm of post-model selection inference. Following
 the debiasing approach proposed by \cite{zhangcun2014}, \cite{cai2017} investigates both
adaptivity and minimax rate of the debiasing estimation, which provides useful insights on the rate of model contraction and expansion considered in this paper. Specifically, an LDP estimator, $\hat \fb$, takes a debiasing step under an
operation of this form: $\hat \fb = \hbeta_{\lambda} + \frac{1}{n} \hat \Theta
\fX^T(\fy - \fX\hbeta_{\lambda})$, where $\hat \Theta$ is a sparse estimate  of
precision matrix $\Sigma^{-1}$.  When matrix $\hat\Theta$ is properly
constructed with a well-controlled behavior, the bias term, $\Delta = \sqrt n (\hat \Theta \fS -
\fI_p) (\hbeta -\fbetao)$, would become asymptotically negligible. In this case, statistical inference can be
conducted using the debiased estimator $\hat \fb$. It is known that obtaining a desirable $\hat
\Theta$ is not a trivial task due to the singularity of sample covariance
$\fS$.  For examples, \cite{vandegeer2014} proposes to use node-wise LASSO to
get $\hat \Theta$, while \cite{java2014} adopts a convex optimization algorithm
to obtain $\hat \Theta$.  It is worth noting that these existing approaches are computationally burdensome, and require extra regularity conditions to ensure the estimated sparse $\hat \Theta$ to be feasible and stable. In the
setting of the LDP estimator, \cite{zhangcheng2017} proposes a bootstrap-based
simultaneous inference for a group, {\emph {say}} $\mG$, of parameters in $\fbeta^*$ via the distribution of quantity $\max_{j\in \mG} \sqrt n |\hat b_{j} - \beta_{j}^*|$, where the bootstrap resampling,
unfortunately, demands much more computational power than a regular LDP estimator based on the node-wise
LASSO estimation $\hat \Theta$.

Overcoming the excessive computational cost on acquiring $\hat \Theta$ motivates us to consider
a ridge type of approximation to the precision matrix $\Sigma^{-1}$, in a
similar spirit to the approach proposed by \citet{Ledoit2004365} for estimation of a
high-dimensional covariance matrix.  Note that the LASSO estimator
$\hbeta_{\lambda}$ satisfies the following Karush-Kuhn-Tucker (KKT) condition:
\begin{align}
-\frac{1}{n}\fX^T\fepsilon+ \fS (\hbeta_{\lambda} -\fbetao) +  \lambda \fkappa = \fzero,
\label{norm.eq}
\end{align}
where $\fkappa = (\kappa_1, \cdots, \kappa_p)^T$ is the subdifferential of
$\onenorm{\hbeta_{\lambda}}{}$ whose $j$th component is $\kappa_j = 1$ if
$\hat\beta_{\lambda, j} > 0$, $\kappa_j = -1$ if $\hat{\beta}_{\lambda, j} <
0$, and $\kappa_j \in [-1, 1]$ if $\hat\beta_{\lambda, j} = 0$.  Let $\ftau$ be
a $p\times p$ diagonal matrix $\text{diag}(\tau_1,\cdots,\tau_p)$ with all
positive element $\tau_j>0$, $j=1,\cdots,p$. We propose to add a term $\ftau
(\hbeta_{\lambda} - \fbetao)$, and then multiply $\ifSigmatau{\ftau}$ on the both
sides of \eqref{norm.eq}, leading to an equivalent expression of \eqref{norm.eq},
\begin{equation}
  -\frac{1}{n}\ifSigmatau{\ftau} \fX^T\fepsilon + \big\{(\hbeta_{\lambda} + \lambda\ifSigmatau{\ftau} \fkappa) -\fbetao \big\} - \ifSigmatau{\ftau} \ftau (\hbeta_{\lambda} -\fbetao) =  \fzero,
  \label{kkt.stage1}
\end{equation}
where $\hbeta_{\lambda} + \lambda\ifSigmatau{\ftau} \fkappa$ is the debiasing estimator, and $\fSigmatau{\ftau} = \fS+\ftau$ is a ridge-type sample covariance matrix.
It is easy to see that on the basis of \eqref{kkt.stage1}, establishing a valid inference on $\fbetao$
becomes straightforward if $\fSigmatau{\ftau}$ is nonsingular and bias term
$\ifSigmatau{\ftau} \ftau (\hbeta_{\lambda}-\fbetao)$ may be asymptotically
negligible under a properly tuned matrix $\ftau$. The associated technical
treatments are of theoretical interest but methodologically challenging. To address such challenges,
in this paper, we propose a new approach, termed as Method of Contraction and
Expansion (MOCE).

Our solution based on the proposed MOCE offers a practically feasible way to perform a valid simultaneous post-model selection inference in which the ridge type matrix $\ftau$ is properly tuned to establish desirable theoretical guarantees. As seen later in the paper, the ridge matrix $\ftau$ plays a key role in
determining the length of confidence interval, which can vary according to signal strengths.
That is, MOCE is able to provide a wider confidence interval which is deemed for a strong signal to
achieve a proper coverage, while a shorter one for a null signal. This is because a null signal is known with zero coefficient (i.e., no need for estimation once being identified), whereas a non-null signal is only known with non-zero coefficient, which needs to be further estimated in order to construct its confidence interval, and thus incurs extra variability in inference. Specifically, MOCE takes on an expanded model $\mhA$ that is enlarged from an
initially selected model, in the hope that the bigger model may include most
of ``weak'' signals which will be handled together with strong signals in
inference. In this way, weak signals that have non-zero coefficients are separated from null signals that have zero coefficients.
Technically, we attempt to build an expanded model big enough so that it is able to cover both strong signals and most, if not all,
of weak signals under some mild regularity conditions.  Implementing the idea of model expansion
is practically feasible; for example, the LASSO method allows us not only to
identify strong signals, but also to rank predictors in a descending order via their solution paths. With a given
expanded model, MOCE modifies the original KKT condition accordingly, where
the precision matrix $\Sigma^{-1}$ is estimated by $(\fS + \ftau)^{-1}$. Under
the sparsity assumption $a=o(n/\log p)$ and some additional mild conditions,
the bias term in \eqref{kkt.stage1} vanishes asymptotically with a proper
rate, and consequently confidence region for a set of regression parameters is readily
constructed in the paradigm of MOCE.

This paper makes new contributions to the following five domains. (i) MOCE is established
under weaker sparsity conditions required for valid simultaneous inference in comparison to those given in the current literature.
That is, MOCE assumes the sparsity condition $a = o(n/\log p)$, instead of the
popular sup-sparsity assumption, $a= o( \sqrt n / \log p )$; more importantly, MOCE does not demand additional sparsity
assumptions required by the node-wise LASSO to obtain sparse estimate of the precision matrix.
(ii) MOCE is shown to achieve a smaller error bound in terms of mean squared
error (MSE) in comparison to the seminal LDP debiasing method. In effect, MOCE estimator
has the MSE rate $\|\hbetatau{\ftau} - \fbetao\|_2 = O_p(\sqrt{\tilde{a}\log(\tilde{a})/n})$
with $\tilde{a}$ being the size of the expanded model, clearly lower than
$O_p(\sqrt{ap/n})$, the rate of the LDP estimator.  (iii) MOCE enjoys both reproducibility and numerical stability in
inference because the model expansion leaves little ambiguity for
post-selection inference as opposed to many existing methods based on a selected
model that may vary substantially due to different tuning procedures
\cite{berk2013}. (iv) MOCE is advantageous for its fast computation, because of the
ridge-type regularization, which is known to be conceptually simple and computationally efficient.
It is shown that the computational complexity of MOCE is of order
$O(n(p-\tilde{a})^2)$, in comparison to the order $O(2np^2)$ of the LDP method.
(v) MOCE enables us to construct a new simultaneous test similar to the classical Wald test for a set of parameters based on
its asymptotic normal distribution.  The proposed hypothesis test method is computationally superior to the bootstrap-based test \cite{zhangcheng2017} based on the sup-norms of individual
estimation errors. All these improvements above make the MOCE method ready to be applied in real-world applications.

The rest of the paper is organized as follows. Section \ref{sec.notation}
introduces notation and Section \ref{sec.prelim} provides preliminary
results that are used in the proposed MOCE method. In Section \ref{sec.method} we discuss in detail about
MOCE and its algorithm, including computational complexity and schemes for
model expansion.  Section \ref{sec.main} concerns theoretical guarantees for
MOCE, including a new simultaneous test. Through simulation experiments,
Section \ref{sec.simulation} illustrates performances of MOCE, with comparison
to existing methods.  Section \ref{sec.discussion} contains some concluding
remarks. Some lengthy technical proofs are included in the Appendix.

\section{Notation}
\label{sec.notation}
For a vector $\fv=(\nu_1,\cdots,\nu_p)^T \in \bR^p$, the $\ell_0$-norm is $\|
\fv\|_0 = \sum_{j=}^p 1\{|\nu_j|>0\}$; the $\infty$-norm is
$\supnorm{\fv}=\underset{1\leq j\leq p}{\max}|\nu_j|$; and the $\ell_2$-norm is
$\twonorm{\fv}{2} = \sum_{j=1 }^p \nu_j^2$.  For a $p \times p$ matrix
$\fW=(w_{ij})_{1\leq i \leq j\leq p} \in \bR^{p \times p}$, the $\infty$-norm is $\maxnorm{\fW} =
\underset{1\leq j,j'\leq p}{\max} |w_{jj'}|$ and the Frobenious norm is
$\fnormt{\fW}{2} = \tr{\fW^T \fW}$ where $\tr{\fW}$ is the trace of matrix
$\fW$.  Refer to \cite{horn2012} for other matrix norms.  Let
$\rhomin{\fW}$ and $\rhomax{\fW}$ be the smallest and largest nonzero singular
values of a positive semi-definite matrix $\fW$, respectively.

With a given index subset $\mB \subset \{1,\dots,p\}$, vector
$\fv \in \bR^p$  and matrix $\fW \in \bR^{p \times p}$ can be partitioned as $\fv =
(\fv_{\mB}^T, \ \fv_{\mB^c}^T )^T$ and
$\fW= \begin{pmatrix}
		\fW_{\mB\mB} & \fW_{\mB\mB^c} \\
		\fW_{\mB^c\mB} & \fW_{\mB^c\mB^c}
	\end{pmatrix}$.
For two positive definite matrices $\fW_1$ and $\fW_2$, their L\"oewner order $\fW_1
\succ \fW_2$ indicates that $\fW_1 - \fW_2$ is  positive definite. For two sequences of real numbers
$\{u_n\}$ and $\{ v_n\}$, the expression $u_n \asymp v_n$ means that there exist positive
constants $c$ and $C$ such that $c \leq \lim \inf_n (u_n/v_n) \leq \lim \sup_n
(u_n/v_n) \leq C$.

For the self-containedness, here we introduce restricted eigenvalue $RE(s,k)$ condition and sparse
eigenvalue $SE(s)$ condition; refer to \cite{bickel2009} for more details.
For a given subset $\mJ \subset \{1,\cdots,p\}$ and a constant $k \geq 1$,
define the following subspace $\bR (\mJ, k)$ in $\bR^p$:
\[\bR (\mJ, k):= \{ \fv \in \bR^p: \onenorm{\fv_{\mJ^c}}{} \leq k
\onenorm{\fv_{\mJ}}{}\}.\]
A sample covariance matrix $\fS =
 \frac{1}{n} \fX^T \fX$ is said to satisfy the restricted eigenvalue $RE(s,k)$ condition if for $1\leq s
\leq p$ and $k>0$ there
exists a constant $\phi_0>0$  such that
\begin{align}
\min_{\substack{\mJ \subset \{1,\ldots,p\} \\
|\mJ|\leq s}}\ \min_{\substack{ \fv \in \bR(\mJ,k)}} \frac{\twonorm{ \fX \fv}{2}}{ n
\twonorm{\fv}{2}} \geq \phi_0.
\end{align}
A sample covariance matrix $\fS$ is said to satisfy the sparse
eigenvalue $SE(s)$ condition if for any $\fv \in \bR^p$ with $\| \fv \|_0 \leq
s$ it holds
\begin{align}
  0 < \lambda_{\min}(s) \leq \lambda_{\max}(s) < \infty,
\end{align}
where
\[
  \lambda_{\min}(s) := \min_{\substack{\| \fv\|_0 \leq s \\ \fv \neq \fzero}} \frac{\twonorm{\fX
  \fv}{2}}{n\twonorm{\fv}{2}}, \quad \lambda_{\max}(s) := \max_{\substack{\|\fv\|_0 \leq s \\ \fv \neq \fzero}} \frac{\twonorm{\fX
  \fv}{2}}{n\twonorm{\fv}{2}}.
\]

\section{Preliminary Results}
\label{sec.prelim}
The first regularity condition on the design matrix $\fX$ is given as follows.
\begin{assp}
  The design matrix $\fX$ in the linear model~\eqref{linear.model} satisfies the $RE(s, k)$ condition for $k=3$ and $s = a$, where $a$ is the number of non-null signals.
  \label{assp.1}
\end{assp}

Assumption~\ref{assp.1} above is routinely assumed for design matrix $\fX$ in
a high-dimensional linear model; see for example, \citep{bickel2009,
zhangcun2014}. Note that the compatibility assumption given in
\cite{vandegeer2014} is slightly weaker than Assumption
\ref{assp.1}.

As discussed above, when the bias term $\ifSigmatau{\ftau} \ftau
(\hbeta_{\lambda}-\fbetao)$ in \eqref{kkt.stage1} is asymptotically negligible,
the modified KKT \eqref{kkt.stage1} enables us to establish an asymptotic
distribution for the proposed debiasing estimator of the form:
\begin{equation}
\tilde \fbeta_{\ftau} = \hbeta_{\lambda} + \lambda \ifSigmatau{\ftau}\fkappa.
\label{betatau.est}
\end{equation}
Lemma \ref{lemma.fnormt} below assesses both Frobenious norm and $\infty$-norm
of $\ifSigmatau{\ftau} \ftau$, a key term in the bias. This lemma suggests that when $p \gg n$ it is
impossible to fully reduce the LASSO bias in $\fbeta_{\lambda} - \fbeta^*$
\cite{buhlmann2013}. Rather, in this paper, alternatively, we are able to establish an appropriate order for the ridge tuning parameters in matrix$\ftau$, with which
the resulting $\ifSigmatau{\ftau} \ftau$ is controlled at a desirable large-sample rate.
\begin{lemma}
  Consider the sample covariance $\fS = \frac{1}{n}\fX^T \fX$.
Let the ridge matrix $\ftau = \text{diag} (\tau_1,\cdots,\tau_p)$ with
$\tau_j>0$ for $j=1,\cdots,p$, and $\tau_{\min}= \min_{1\leq j\leq p}
\tau_j$ and $\tau_{\max}= \max_{1\leq j\leq p}
\tau_j$. Let $\fSigmatau{\ftau} = \fS+\ftau$. Then, the Frobenious norm and $\infty$-norm of $\ifSigmatau{\ftau} \ftau$ are given as follows, respectively:
\begin{eqnarray*}
	\max(p-n, 0)  + \frac{\min(n,
        p)}{\{\rhomax{\ftau^{-1/2}\fS\ftau^{-1/2}}+1\}^2}
	\leq \fnormt{\ifSigmatau{\ftau}  \ftau}{2}  \\
	\leq \max(p-n, 0) +  \frac{\min(n,
        p)}{\{\rhomin{\ftau^{-1/2}\fS\ftau^{-1/2}} + 1\}^2};\\
        \frac{\tau_{\min}}{\rhomax{\fS} + \tau_{\max}}  \leq \maxnorm{\ifSigmatau{\ftau} \ftau}
	\leq
        \begin{cases}
          \frac{\tau_{\max}}{\tau_{\min}}, & \text{if $p>n$}; \\
          \frac{\tau_{\max}}{\rhomin{\fS} + \tau_{\min}}, & \text{if $p\leq n$}.
	\end{cases}
\end{eqnarray*}
\label{lemma.fnormt}
\end{lemma}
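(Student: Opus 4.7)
The plan is to treat the Frobenius and $\infty$-norm claims with different machinery: a simultaneous-diagonalization argument for the former, and standard matrix-norm inequalities together with the classical diagonal-inversion inequality for the latter.

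For the Frobenius bound, I would first write $\fS+\ftau=\ftau^{1/2}(\fA+\fI)\ftau^{1/2}$ with $\fA := \ftau^{-1/2}\fS\ftau^{-1/2}$. Because $\fA$ is obtained from $\fS$ by the nonsingular congruence $\ftau^{-1/2}$, it shares the rank of $\fS$, namely $\min(n,p)$, so $\fA$ has $\max(p-n,0)$ zero eigenvalues together with $\min(n,p)$ positive ones in $[\rhomin{\fA},\rhomax{\fA}]$. The factorization yields $\ifSigmatau{\ftau}\ftau=\ftau^{-1/2}(\fA+\fI)^{-1}\ftau^{1/2}$, which is similar to $(\fA+\fI)^{-1}$, whose spectrum consists of $\max(p-n,0)$ ones plus $\min(n,p)$ values in $[1/(1+\rhomax{\fA}),\,1/(1+\rhomin{\fA})]$. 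I would then expand $\|\ifSigmatau{\ftau}\ftau\|_F^2=\tr{\ftau(\fS+\ftau)^{-2}\ftau}$, substitute the factorization, and separate the contribution of the null space of $\fA$ (which accounts for exactly the $\max(p-n,0)$ term) from the contribution on the range of $\fA$, on which the remaining $\min(n,p)$ squared reciprocal eigenvalues live between the two stated extremes.

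For the $\infty$-norm, I would use $\maxnorm{\ifSigmatau{\ftau}\ftau}\le\|\ifSigmatau{\ftau}\ftau\|_2\le\tau_{\max}/\lambda_{\min}(\fS+\ftau)$. Weyl's inequality gives $\lambda_{\min}(\fS+\ftau)\ge\tau_{\min}$ in general, sharpened to $\lambda_{\min}(\fS+\ftau)\ge\rhomin{\fS}+\tau_{\min}$ when $p\le n$ and $\fS$ has full rank, yielding both cases of the stated upper bound. For the lower bound, I would invoke the classical fact $(M^{-1})_{ii}\ge 1/M_{ii}$ for any symmetric positive definite $M$ (a consequence of Cauchy--Schwarz applied in the eigenbasis), giving $(\ifSigmatau{\ftau}\ftau)_{ii}=\tau_i\,((\fS+\ftau)^{-1})_{ii}\ge \tau_i/(S_{ii}+\tau_i)$; then taking the maximum over $i$ and bounding $S_{ii}\le\rhomax{\fS}$ delivers the claimed $\tau_{\min}/(\rhomax{\fS}+\tau_{\max})$.

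The principal technical obstacle is the Frobenius \emph{upper} bound, because $\ifSigmatau{\ftau}\ftau$ fails to be symmetric when $\ftau$ is non-scalar, so its squared Frobenius norm equals the sum of squared \emph{singular} values, and Weyl's inequality connecting singular values with eigenvalues only provides the matching lower estimate $\|\ifSigmatau{\ftau}\ftau\|_F^2\ge\sum_i\lambda_i^2$. Matching the candidate spectral upper bound therefore calls for controlling the non-commutativity of $\ftau$ with $(\fA+\fI)^{-1}$. I would attempt this through the trace identity $\|\ifSigmatau{\ftau}\ftau\|_F^2 = \tr{(\fA+\fI)^{-1}\ftau(\fA+\fI)^{-1}\ftau^{-1}}$ and bound the resulting cross terms by exploiting that $(\fA+\fI)^{-1}$ has $\max(p-n,0)$ of its eigenvalues pinned at $1$, so the principal part of the trace is forced to equal $\max(p-n,0)$ and only the $\min(n,p)$-dimensional residual acting on the range of $\fA$ needs careful treatment.
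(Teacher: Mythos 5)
Your treatment of the $\infty$-norm is correct and essentially the paper's own: the paper likewise gets the upper bound from $\maxnorm{\ifSigmatau{\ftau}\ftau}\le\maxnorm{\ifSigmatau{\ftau}}\,\tau_{\max}\le\tau_{\max}/\lambda_{\min}(\fS+\ftau)$ plus Weyl's inequality $\lambda_{\min}(\fS+\ftau)\ge\xi+\tau_{\min}$ (with $\xi=\rhomin{\fS}$ only when $p\le n$), and the lower bound from the fact that the largest entry of a positive definite matrix lies on its diagonal and every diagonal entry of $\ifSigmatau{\ftau}$ is at least $\lambda_{\min}(\ifSigmatau{\ftau})\ge 1/(\rhomax{\fS}+\tau_{\max})$; your route through $(M^{-1})_{ii}\ge 1/M_{ii}$ is an equally valid variant. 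Your Frobenius lower bound, via the similarity of $\ifSigmatau{\ftau}\ftau$ to $(\fA+\fI)^{-1}$ with $\fA=\ftau^{-1/2}\fS\ftau^{-1/2}$ together with Schur's inequality $\|C\|_F^2\ge\sum_i\lambda_i(C)^2$, is also sound.

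The genuine gap is exactly where you flag it: the Frobenius \emph{upper} bound is never established. ``I would attempt this through the trace identity and bound the resulting cross terms'' is a plan, not a proof, and those cross terms are the entire difficulty. You should know, however, that the paper does not resolve this either: its proof writes $\fS+\ftau=\fU(\fD+\fI)\fU^T$ with $\fU=\ftau^{1/2}\fU_1$ and asserts $\fnormt{\ifSigmatau{\ftau}\ftau}{2}=\tr{(\fD+\fI)^{-1}(\fU^T\fU)^{-1}(\fD+\fI)^{-1}\fU^T\fU}=\sum_j(\rho_{(j)}+1)^{-2}$, justified by the claim $[(\fU^T\fU)^{-1}(\fD+\fI)^{-1}\fU^T\fU]_{jj}=(\rho_{(j)}+1)^{-1}$. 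That claim is false for a general positive diagonal $\ftau$: conjugation preserves the trace, not individual diagonal entries. Indeed, for $\fS=\bigl(\begin{smallmatrix}1&0.5\\0.5&1\end{smallmatrix}\bigr)$ and $\ftau=\mathrm{diag}(1,4)$ one computes $\fnormt{\ifSigmatau{\ftau}\ftau}{2}\approx 0.981$ while $\sum_j(\rho_{(j)}+1)^{-2}\approx 0.957$, consistent with your observation that Schur's inequality only gives the lower estimate. The paper's identity is exact only when $\ftau$ commutes with $\fS$, in particular for $\ftau=\tau\fI$, which is the only case the paper actually uses downstream (Section 5 restricts to $\ftaua=\tau_a\fI$, $\ftauc=\tau_c\fI$). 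So your proposal is incomplete precisely on the one step where the paper's own argument is also unsound; to close it you would either restrict to scalar $\ftau$, where $\ifSigmatau{\ftau}\ftau$ is symmetric and eigenvalues coincide with singular values, or supply a genuinely new upper bound on $\tr{B\ftau^{-1}B\ftau}$ for $B=(\fA+\fI)^{-1}$, which neither you nor the paper provides.
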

Proof of Lemma \ref{lemma.fnormt} is given in Appendix \ref{proof.fnormt}. According to Lemma~\ref{lemma.fnormt},
when $p\leq n$, it is interesting to note that the $\infty$-norm $\maxnorm{\ifSigmatau{\ftau} \ftau}$ is bounded above by
$\frac{\tau_{\max}}{\rhomin{\fS} + \tau_{\min}}$. This upper bound may converge
to 0 if $\tau_{\max} = o(1)$ and $ \rhomin{\fS} = O(1)$.  On the
other hand, when $p> n$, its upper bound is $\tau_{\max}/\tau_{\min}$, which is
always greater than or equal to 1.  Hence, when $p <n$ the bias term
$\ifSigmatau{\ftau} \ftau (\hbeta_{\lambda}-\fbetao)$ can be controlled by an appropriately small $\ftau$, leading to a  simultaneous inference
on $\fbeta$
by the means of debiasing.  In contrast, the case $``p>n"$
presents the difficulty of bias reduction for $\ifSigmatau{\ftau} \ftau$. Such insight motivates us to seek for an alternative solution in the
framework of post-model selection inference, resulting in our proposed MOCE.

The proposed MOCE mimics the well-known physical phenomenon of thermal contraction and
expansion for materials with the tuning parameter $\lambda$ being an analog to
temperature.  Specifically, MOCE reduces LASSO estimation bias in two steps as
shown in Figure \ref{lab.diagram}. In the step of contraction, LASSO selects
a model $\hat \mA$, represented by the small circle in Figure \ref{lab.diagram},
which may possibly miss some signals contained in the signal set $\mA_s$. In the step
of expansion, MOCE enlarges $\hat \mA$ to form an expanded model $\mhA$, indicated
by the large circle in Figure \ref{lab.diagram}. As a result, the signal set
$\mA_s$ would be completely contained by the expanded model $\mhA$. In other words, MOCE
begins with an initial model $\hat \mA$ through the LASSO regularization which
contains most of important signals, and then expands $\hat \mA$ into a bigger
model $\mhA$ to embrace not only strong signals, but also almost all weak signals. Refer to Section \ref{lab.mha} where required
specific conditions and rules are discussed for the model expansion.
\begin{figure}
  \centering
  \includegraphics[angle=0,width=1\textwidth]{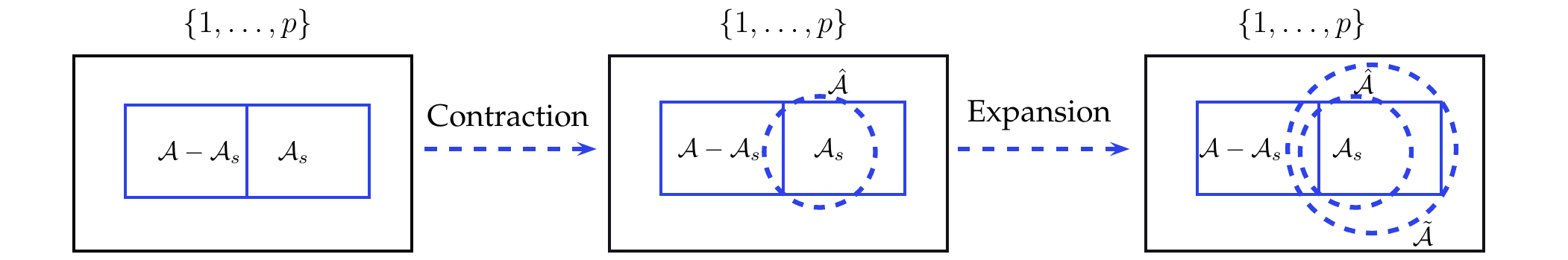}
  \caption{A schematic diagram for MOCE. The inner and outer rectangles
    respectively represent the true model $\mA$ and the full model with all $p$
    predictors  $\{1,\ldots,p\}$. The true model $\mA$ is a union of $\mA_s$
    and $\mA - \mA_s$, where $\mA_s$ denotes a signal set defined in
    \eqref{label.as}.  The small and large circles denote the LASSO selected
  model $\hat \mA$ and the expanded model $\mhA$, respectively.}
    \label{lab.diagram}
\end{figure}

We now introduce notations necessary for a further discussion on the step of model expansion. Let
$\hat \mA  =\{j: |\hat \beta_{\lambda, j}|>0,\ j=1,\cdots,p \}$ be a LASSO selected model,
whose cardinality is denoted by $\hat a = | \hat \mA|$. Here, both $\hat \mA$ and $\hat a$
are tuning parameter $\lambda$-dependent, which is suppressed for the sake of simplicity.
Similarly, let $\mhA$ be an expanded model with cardinality denoted by $\tilde a = |
\mhA|$. Given $\mA$ and $\mhA$, model expansion leads to disjoint subsets of
predictors which may conveniently be presented in Table~\ref{tab.mha} by a 2-way cross-classification, respectively, for the LASSO selected model
$\hat \mA$ (the left table) and $\hat \mA^c$ (the right table), the complement
of $\hat \mA$.
\setlength{\tabcolsep}{10pt} 
\renewcommand{\arraystretch}{1.5} 
\begin{table}[t]
 \setlength{\tabcolsep}{3pt}
 \caption{\label{tab.mha} Disjoint subsets induced by $\mA$, $\hat \mA$ and $\mhA$; the left for the LASSO selected model $\hat \mA$, and the right
 for the complementary model of $\hat \mA$. }
 \centering
  \begin{tabular}{c|c|c|ccc|c|c|c}
      & $\mhA$ & $\mhA^c$ & union & & & $\mhA$ & $\mhA^c$ & union \\
      \cline{1-4} \cline{6-9}
$\mA$ & $\mA\cap\mhA \cap \hat \mA$ & $\mA\cap\mhA^c \cap \hat \mA$ &
$\mA \cap \hat \mA$ & & $\mA$ & $\mA\cap\mhA \cap \hat \mA^c$ & $\mA\cap\mhA^c \cap \hat \mA^c$ &
$\mA \cap \hat \mA^c$ \\
$\mA^c$ & $\mA^c\cap\mhA \cap \hat \mA$ & $\mA^c\cap\mhA^c \cap \hat \mA$ &
$\mA^c \cap \hat \mA$ & & $\mA^c$ & $\mA^c\cap\mhA \cap \hat \mA^c$ & $\mA^c\cap\mhA^c \cap \hat \mA^c$ &
$\mA^c \cap \hat \mA^c$ \\
    \cline{1-4} \cline{6-9}
union & $\mhA \cap \hat \mA$ & $\mhA^c \cap \hat \mA$ &
$\hat \mA$ & & union & $\mhA \cap \hat \mA^c$ & $\mhA^c \cap \hat \mA^c$ &
$\hat \mA^c$ \\
  \end{tabular}
\end{table}
Among these subsets, two are of primary interest, namely, $\mB_{fn}$ and $\mB_{tn}$, given as follows, respectively:
\begin{align}
  \mB_{fn} = \mA \cap \mhA^c, \quad
  \mB_{tn} = \mA^c \cap \mhA^c,
	\label{notation.b}
\end{align}
and let their cardinalities are $b_{fn} = |\mB_{fn}|$
and $b_{tn} = |\mB_{tn}|$, respectively. $\mB_{fn}$ collects signals missed by
expanded model $\mhA$ (i.e., false negatives), while $\mB_{tn}$ collects all null signals that expanded
model $\mhA$ does not contain (i.e., true negatives).
With expanded model $\mhA$,
we assume that the design matrix $\fX$ satisfies Assumption \ref{assp.2}.
\begin{assp}
  The design matrix $\fX$ in the linear model~\eqref{linear.model} satisfies the sparse eigenvalue $SE(s)$ condition for $s = \max(\tilde a, \hat a)$.
  \label{assp.2}
\end{assp}
Assumption \ref{assp.2} ensures that any $s\times s$ main diagonal sub-matrices
of sample covariance matrix $S=\fX^T \fX/n$ has finite positive minimum and maximum singular values, which
essentially requires any selected model, $\hat \mA$ or $\mhA$, to have
well-defined Hessian matrices.

\section{MOCE Method}
\label{sec.method}
We first introduce MOCE and then discuss its computational
complexity. In particular, procedures for model expansion are
discussed in detail in section \ref{lab.mha}.
\subsection{MOCE}
\label{lab.alg}
Suppose an expanded model $\mhA$ has been given.  We partition a LASSO estimator
$\hbeta_{\lambda}$ given in \eqref{LASSO.est} as $\hbeta_{\lambda} =
(\hbetaat{}^T, \hbetact{}^T)^T$. Rewrite KKT condition \eqref{norm.eq}
according to this partition, respectively, for $\mhA$ and $\mhA^c$:
\begin{align}
	  \label{norm.eq.a}
	  -\frac{1}{n} \fX_{\mhA}^T(\fy- \fX_{\mhA}\hbetaat{}  -
          \fX_{\mB_{fn}}\hat \fbeta_{\mB_{fn}} - \fX_{\mB_{tn}}\hat
        \fbeta_{\mB_{tn}}) + \lambda \fkappa_{\mhA} &= \fzero, \\
	  -\frac{1}{n} \fX_{\mhA^c}^T(\fy- \fX_{\mhA}\hbetaat{}  -
          \fX_{\mhA^c}\hat \fbeta_{\mhA^c}) + \lambda \fkappa_{\mhA^c} &= \fzero.
	\label{norm.eq.ac}
\end{align}
It follows from \eqref{norm.eq.a} that
\begin{align}
  \label{lab.moce.kkt.a}
  \fS_{\mhA\mB_{fn}} (\hat \fbeta_{\mB_{fn}}  - \fbeta^*_{\mB_{fn}}) +
  \fS_{\mhA\mB_{tn}} \hat \fbeta_{\mB_{tn}} -\frac{1}{n} \fX_{\mhA}^T\fepsilon + \fSaa (\hbetaat{} - \fbetaat{}) + \lambda \fkappa_{\mhA} &= \fzero.
\end{align}
In regard to expanded model $\mhA$, the corresponding $\ftau$-matrix is an
$\tilde a \times \tilde a$ positive diagonal matrix, denoted by $\ftaua$, and the
corresponding ridge sample covariance submatrix is denoted by $\fSigmaaa =
\fSaa + \ftaua$. Adding $\ftaua(\hbetaat{}
- \fbetaat{})$ and multiplying $\ftaua$ on both sides of equation
\eqref{lab.moce.kkt.a}, we have
\begin{equation}
	\hbetaat{\ftaua}- \fbetaat{} = \frac{1}{n} \ifSigmaaa
	\fX_{\mhA}^T\fepsilon + \fr_a,
\label{norm.eq.a.2}
\end{equation}
where the debiasing estimator $\hbetaat{\ftaua}$ of subvector $\fbetaat{}$
takes the form:
\begin{equation}
\hbetaat{\ftaua} = \hbetaat{} + \lambda \ifSigmaaa \fkappa_{\mhA},
\label{est.a}
\end{equation}
and the remainder $\fr_a$ is given by
\begin{align}
  \begin{split}
  \fr_a & = \ifSigmaaa\ftaua(\hbetaat{} - \fbetaat{}) + \ifSigmaaa
  \fS_{\mhA\mB_{tn}} \hat \fbeta_{\mB_{tn}} + \ifSigmaaa
  \fS_{\mhA\mB_{fn}}(\hat \fbeta_{\mB_{fn}}  - \fbeta^*_{\mB_{fn}})\\
  & \overset{def}{=}I_{11}
  + I_{12} + I_{13}.
  \end{split}
\label{lab.ra}
\end{align}
If $\rhomax{\ftau_a} = o( \sqrt{\log p}/ n )$ holds, Lemma~\ref{lab.rcra}
shows that $\twonorm{\fr_a}{} = o_p(1/\sqrt n )$. Thus, as stated in
Theorem~\ref{thm.normal} equation \eqref{norm.eq.a.2} implies that
$\hbetaat{\ftaua}$ is consistent and follows asymptotically a normal
distribution.

Now, consider the complementary model $\mhA^c$. Following similar steps of
deriving equation \eqref{norm.eq.a.2}, we rewrite \eqref{norm.eq.ac} as
follows:
\begin{equation*}
  \fSca(\hbetaat{} - \fbetaat) +  \fSigmacc(\hbetact{} - \fbetact)  + \lambda \fkappa_{\mhA^c} \nonumber
  = \frac{1}{n} \fX_{\mhA^c}^T \fepsilon  + \ftauc(\hbetact{} - \fbetact),
\end{equation*}
where the corresponding ridge sample covariance submatrix is $\fSigmacc = \fScc
+ \ftau_c$ and $\ftau_c$ is a $(p-\tilde a) \times (p-\tilde a)$ matrix of positive diagonals. Plugging \eqref{norm.eq.a.2} and \eqref{est.a} into the
above equation, we can show
\begin{align}
  \label{lab.cc.2}
  \hbetact{\ftauc} - \fbetact
  =&\frac{1}{n} \ifSigmacc (\fX_{\mhA^c}^T
  - \fSca \ifSigmaaa \fX_{\mhA}^T)\fepsilon + \fr_c,
\end{align}
where  $\hbetact{\ftauc}$ is the debiasing estimator of subvector $\fbetact$,
which takes the following form:
\begin{equation}
\hbetact{\ftauc} = \hbetact{} + \lambda \ifSigmacc\fkappa_{\mhA^c} - \lambda \ifSigmacc \fSca \ifSigmaaa \fkappa_{\mhA},
\label{est.c}
\end{equation}
and the associated remainder term $\fr_c$ is
\begin{equation}
	\fr_c = \ifSigmacc\ftauc ( \hbetact{} -
		\fbetact)  - \ifSigmacc
                \fSca \fr_a \overset{def}{=} I_{21} + I_{22}\fr_a.
	\label{lab.rc}
\end{equation}	
If $\rhomin{\ftau_c} = O\big (\sqrt{\lambda_{\max}(p-\tilde a)} \big )$ holds, we
can show $\twonorm{\fr_c}{} = o_p(1/ \sqrt n)$ in Lemma \ref{lab.rcra}.

Now, combining the two estimators \eqref{est.a} and \eqref{est.c}, namely
$\hbetatau{\ftau} = (\hbetatau{\ftaua}^T, \hbetatau{\ftauc}^T)^T$, we express the proposed MOCE estimator for $\fbeta^*$ as follows,
\begin{align}
\hbetatau{\ftau} = \hbeta_{\lambda} + \lambda \hL^{-1}_{\ftau} \fkappa,
\label{est.all}
\end{align}
where matrix $\hL^{-1}_{\ftau}$ is a $2\times 2$ block matrix given by
\[
  \hL^{-1}_{\ftau}= 	\begin{pmatrix}
		\ifSigmaaa & \fzero \\
		-\ifSigmacc \fSca \ifSigmaaa & \ifSigmacc
		\end{pmatrix}.
\]
In comparison to equation~\eqref{betatau.est}, in~\eqref{est.all} the MOCE presents a different bias correction term, $\lambda \hL^{-1}_{\ftau} \fkappa$.
Consequently, the inverse matrix of $\hL^{-1}_{\ftau}$, $L_{\ftau}$, takes the
form of
\[
	\hL_{\ftau}= 	\begin{pmatrix}
	\fSigmaaa & \fzero \\
	\fSca  & \fSigmacc
	\end{pmatrix},
\]
which is different from the ridge covariance matrix $\hat{\Sigma}_{\ftau} = S + \ftau$ in
\eqref{betatau.est}. The fact of $\hL^{-1}_{\ftau}$ being a lower triangular
matrix implies that the MOCE estimator
$\hbetact{\ftauc}$ in \eqref{est.c} on $\mhA^c$ has no impact on
$\hbetaat{\ftaua}$ in~\eqref{est.a} on $\mhA$.

\subsection{Model expansion and size determination}
\label{lab.mha}
A primary purpose of model expansion is to control the uncertainty of model selection at a lower level than
the sampling uncertainty. This may be achieved by some regularity conditions. Intuitively,
when an expanded model $\mhA$ is too small, $\mhA$ is likely to
miss many weak signals; on the other hand, when an expanded model $\mhA$ is too
large, $\mhA$ would include many noise signals. The size of expanded model in
MOCE is critical as it pertains to a trade-off between uncertainty of model selection and efficiency of statistical inference. In
this setting, the theory for the selection of tuning
parameter $\ftaua$ and $\ftauc$ is also relevant.

\citet{Donoho94idealspatial} show that at a hard threshold $\lambda_{s+} =
\sqrt{2 \log p/n}$ LASSO can achieve the performance of an oracle within a
factor of $2\log p$ in terms of mean squared error. Under the Donoho-Johnstone's order $\lambda_{s+}$, \citet{NIPS2009_3697} develops a consistent thresholding
procedure for variable selection. For the purpose of inference, we want to have
a relatively large model to include most weak signals, so we set $\lambda_s =
\sqrt{2 \log p} /n$. We consider a factor $a^*>0$ to scale the product
$\lambda_s \sigma$, defined as the smallest integer such that
\begin{equation}
  \label{lab.lambdah}
  \sum_{i=1}^p
  \min \{ |\beta_i^*|, \lambda_s \sigma \} \leq a^* \lambda_s \sigma.
\end{equation}
Note that term $\lambda_s\sigma$ represents a compound of model selection
uncertainty $\lambda_s$ and sampling uncertainty $\sigma$. Denote a signal set
\begin{equation}
	\label{label.as}
\mA_s = \{j: |\beta_j^*|> \lambda_s \sigma, j=1,\dots,p \},
\end{equation}
whose cardinality is $a_s = |\mA_s|$. Clearly $a_s \leq a^*$. It is worth noting that
factor $a^*$ measures the overall cumulative signal strength, while size $a_s$
of $\mA_s$ is the number of signals  stronger than the corresponding factor
$\lambda_s \sigma$.  Essentially, the set given in \eqref{label.as} is formed by
the signal-to-noise ratio, where the noise arises from both model selection
uncertainty $\lambda_{s}$ and sampling error uncertainty $\sigma$. Apparently,
$\mA_s$ also contains the set of stronger signals defined by $\mA_{s+} = \{j:
|\beta_j^*|> \lambda_{s+} \sigma, j=1,\dots,p \}$.\; that is, $\mA_{s+} \subset \mA_{s}$.

For a given signal set $\mA_s$,
Assumption \ref{assp.4} below describes characteristics of expanded model $\mhA$.
\begin{assp}
  $\twonorm{\fbeta^*_{\mhA^c \cap \mA_s}}{} = o_p( 1/ \sqrt n )$.
  \label{assp.4}
\end{assp}
Assumption \ref{assp.4} is a very weak condition; first, it holds when $\tilde \mA^c \cap \mA_s
=\emptyset$, that is, expanded model $\mhA$ contains all signals. However, this full capture may be relaxed in MOCE; in other words, Assumption~\ref{assp.4} permits expanded model $\mhA$ to leak some weak signals with their
strength being order of $o_p( 1/ \sqrt n)$.

\begin{assp}
  \label{assp.3}
  $\sqrt{\hat a}\supnorm{\hat \fbeta_{\mhA^c \cap \hat \mA}}{} = o_p( 1/ \sqrt
  n )$.
\end{assp}
Assumption \ref{assp.3} is a very mild condition too, which can always be satisfied
if $\hat \mA \subseteq \mhA$. This assumption is imposed to protect rare
occasions when an initial LASSO selection ends up with a model containing
excessively many small nonzero coefficients. In this case, to proceed MOCE for
inference, Assumption \ref{assp.3} requires to choose a relatively small $\mhA$
which may not necessarily cover $\hat \mA$. As stated in Assumption~\ref{assp.4}, the leakage of very weak
signals is allowed by MOCE in inference.

When LASSO solution paths are monotonic in
$\lambda$, we may choose a hard threshold
$\lambda_a = \text{min}\{1/ \sqrt{\hat a n}, \lambda_s \}\geq 1/n$
to directly determine the size of $\mhA$. The fact of $\lambda_a$ being
smaller than $\lambda_{s+}$ implies that more variables are included in
$\mhA$. Assumption \ref{assp.3} further implies that the maximum signal strength among the false negatives and true negatives is well controlled; that is,
\begin{align}
	\label{lab.b4}
	\begin{split}
          \max \big \{ \twonorm{\hat \fbeta_{\mB_{tn}}}{}, \twonorm{\hat
          \fbeta_{\mB_{fn}}}{} \big \} & \leq \max \big \{ \twonorm{\hat
          \fbeta_{\mB_{tn}\cap \hat \mA}}{}, \twonorm{\hat \fbeta_{\mB_{fn}
        \cap \hat \mA}}{} \big \}  \\
        &\leq \sqrt {\hat a} \supnorm{\hat \fbeta_{\mhA^c \cap
        \hat \mA}}{} = o_p( 1/ \sqrt n).
	\end{split}
\end{align}

In practice, the size of $\mhA$ may be set to $\tilde a = n(1-
\lambda_a/\lambda_{\max})$ where
$\lambda_{\max}$ is the largest tuning value in LASSO solution paths at which
all parameters are shrunk to zero. We first select variables contained in $\tilde{\hat{\mA}} \overset{def}{=} \{j:
  |\hat \beta_{j,\lambda_a}|>0, j=1,\cdots,p
\} \cup \hat \mA$ into $\mhA$ if $|\tilde{\hat{\mA}}| < \tilde a$.  Next we introduce a noise injection step to
randomly select $\tilde a - \big | \tilde{\hat{\mA}} |$ predictors into $\mhA$ from variables with zero estimates at
$\lambda_s$. This noise injection step eseentially helps reduce the
sensitivity of the expanded model with variable selection relative to the
sampling variability.  It is worthy to comment that although LASSO has been the method of choice for
our procedure in this paper, in fact, the proposed MOCE allows other methods to
construct $\mhA$ as long as a chosen expanded model $\mhA$ satisfies
Assumptions \ref{assp.4} and \ref{assp.3}.    Based on above assumptions, Lemma
\ref{lab.rcra} assesses the remainder terms $\fr_a$ in \eqref{lab.ra} and $\fr_c$ in
\eqref{lab.rc} in terms of $\ell_2$-norm.

\begin{lemma}
  \label{lab.rcra}
   Suppose that Assumptions \ref{assp.1}--\ref{assp.3} hold. Assume $a = o( n/ \log p)$, $\lambda \asymp
   \sqrt{\log p/n}$ and
   \begin{align}
	   \rhomax{\ftaua} = o(\sqrt{\log p}/n), \quad \rhomin{\ftau_c} =
    O(\sqrt{\lambda_{\max}(p-\tilde a)}).
	   \label{lab.rarc.condition}
    \end{align}
    Then, $\twonorm{\fr_a}{} =o_p( 1/ \sqrt n)$ and $\twonorm{\fr_c}{} =o_p(1/ \sqrt n)$.
\end{lemma}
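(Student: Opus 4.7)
The plan is to bound each constituent of the decompositions $\fr_a = I_{11} + I_{12} + I_{13}$ and $\fr_c = I_{21} + I_{22}\fr_a$ in $\ell_2$-norm and then combine by the triangle inequality. The workhorse ingredients are: the oracle LASSO rate $\twonorm{\hbeta_{\lambda} - \fbeta^*}{} = O_p(\sqrt{a\log p/n})$, valid under Assumption~\ref{assp.1} with $\lambda \asymp \sqrt{\log p/n}$ and $a = o(n/\log p)$; the sparse eigenvalue bounds from Assumption~\ref{assp.2} that give $\rhomin{\fSaa}$ bounded below and $\lambda_{\max}(\max(\tilde a,\hat a))$ above by positive constants; the Schur-type block inequality $\|\fS_{\mB\mB'}\|_{op}^2 \leq \lambda_{\max}(\fS_{\mB\mB})\lambda_{\max}(\fS_{\mB'\mB'})$, which follows from $\fS \succeq 0$; and the support and weak-signal controls supplied by Assumptions~\ref{assp.4} and~\ref{assp.3} together with~\eqref{lab.b4}.

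For $I_{11}$ I would bound $\|\ifSigmaaa\ftaua\|_{op} \leq \rhomax{\ftaua}/\rhomin{\fSaa}$ via Assumption~\ref{assp.2} and then multiply by the LASSO rate on $\twonorm{\hbetaat{} - \fbetaat{}}{}$; the hypothesis $\rhomax{\ftaua} = o(\sqrt{\log p}/n)$ together with $a = o(n/\log p)$ yields $\|I_{11}\|_2 = o_p(\sqrt a \log p/n^{3/2}) = o_p(1/\sqrt n)$. For $I_{12}$ and $I_{13}$, the operator norm $\|\ifSigmaaa\fS_{\mhA,\cdot}\|_{op}$ is $O(1)$ under Assumption~\ref{assp.2}, noting that $\hat\fbeta_{\mB_{tn}}$ and $\hat\fbeta_{\mB_{fn}}$ are each supported on at most $\hat a$ coordinates so the effective Schur bound involves only $\lambda_{\max}(\tilde a)\lambda_{\max}(\hat a)$, while \eqref{lab.b4} controls $\twonorm{\hat\fbeta_{\mB_{tn}}}{}$ and $\twonorm{\hat\fbeta_{\mB_{fn}}}{}$ at rate $o_p(1/\sqrt n)$. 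For $\twonorm{\fbeta^*_{\mB_{fn}}}{}$ I would split $\mB_{fn} = (\mB_{fn}\cap\mA_s)\cup(\mB_{fn}\cap\mA_s^c)$: Assumption~\ref{assp.4} handles the strong-signal piece, and the weak-signal piece is $o(1/\sqrt n)$ deterministically since $\sqrt a\,\lambda_s\sigma = \sqrt a\,\sigma\sqrt{2\log p}/n = o(1/\sqrt n)$ under $a = o(n/\log p)$. Together these give $\twonorm{\fr_a}{} = o_p(1/\sqrt n)$.

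For $\fr_c$, the main step for $I_{21}$ is establishing $\twonorm{\hbetact{} - \fbetact}{} = o_p(1/\sqrt n)$. Since $\hbeta_\lambda$ vanishes outside $\hat\mA$, $\twonorm{\hbetact{}}{} = \twonorm{\hat\fbeta_{\mhA^c\cap\hat\mA}}{} \leq \sqrt{\hat a}\supnorm{\hat\fbeta_{\mhA^c\cap\hat\mA}} = o_p(1/\sqrt n)$ by Assumption~\ref{assp.3}, and $\twonorm{\fbetact}{}$ is $o_p(1/\sqrt n)$ via the same $\mA_s$ versus $\mA_s^c$ decomposition used above. Coupled with $\|\ifSigmacc\ftauc\|_{op} = O(1)$, obtained from the identity $\ifSigmacc\ftauc = \ftauc^{-1/2}(I + \ftauc^{-1/2}\fScc\ftauc^{-1/2})^{-1}\ftauc^{1/2}$ under the standing bounded-condition-number use of the diagonal $\ftauc$, this yields $\|I_{21}\|_2 = o_p(1/\sqrt n)$. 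For $I_{22}\fr_a = -\ifSigmacc\fSca\fr_a$, the Schur inequality gives $\|\fSca\|_{op} \leq \sqrt{\lambda_{\max}(\tilde a)\lambda_{\max}(p-\tilde a)}$, while $\|\ifSigmacc\|_{op} \leq 1/\rhomin{\ftauc}$ together with the assumed scaling $\rhomin{\ftauc} \asymp \sqrt{\lambda_{\max}(p-\tilde a)}$ collapses the product to $\|\ifSigmacc\fSca\|_{op} = O(1)$, so $\|I_{22}\fr_a\|_2 = O(1)\cdot\twonorm{\fr_a}{} = o_p(1/\sqrt n)$.

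The main obstacle is the complement block in the genuinely high-dimensional regime $p - \tilde a > n$, where $\fScc$ is singular and $\ifSigmacc$ exists only by virtue of $\ftauc$. The scaling $\rhomin{\ftauc} \asymp \sqrt{\lambda_{\max}(p-\tilde a)}$ is tuned precisely to absorb the $\sqrt{\lambda_{\max}(p-\tilde a)}$ blowup of $\|\fSca\|_{op}$ delivered by the Schur inequality, while remaining small enough that $\|\ifSigmacc\ftauc\|_{op}$ in $I_{21}$ does not degrade. Threading this needle, along with the careful bookkeeping that routes each piece of $\mhA^c$ to either Assumption~\ref{assp.4} or the deterministic weak-signal calculation driven by the definition of $\mA_s$, is where the technical care must be concentrated.
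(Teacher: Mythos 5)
Your proposal is correct and follows essentially the same route as the paper: the identical term-by-term decomposition $\fr_a = I_{11}+I_{12}+I_{13}$, $\fr_c = I_{21}+I_{22}\fr_a$, the same Schur-type bounds $\|\fS_{\mB\mB'}\|\le\sqrt{\lambda_{\max}(|\mB|)\lambda_{\max}(|\mB'|)}$ restricted to the $\hat\mA$-support of the LASSO estimate, the same use of \eqref{lab.b4} and the $\mA_s$ versus $\mA_s^c$ split for $\fbeta^*_{\mB_{fn}}$, and the same cancellation of $\sqrt{\lambda_{\max}(p-\tilde a)}$ against $\rhomin{\ftauc}$ in the $I_{22}$ term. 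The only cosmetic differences are that you bound $I_{11}$ via the $\ell_2$ LASSO rate rather than the $\ell_1$ rate and phrase the weak-signal tail with $a$ in place of $a^*$; both read the condition on $\rhomin{\ftauc}$ as the lower bound $\asymp\sqrt{\lambda_{\max}(p-\tilde a)}$ that the argument actually requires.
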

\begin{proof}
By the expression of $\fr_a$ in \eqref{lab.ra}, it suffices to show that three terms
$I_{11}$,  $I_{12}$ and  $I_{13}$ are all of order $o_p(1/\sqrt{n})$.
Similarly, by the expression of $\fr_c$ in \eqref{lab.rc}, the order of $\fr_c$
is established if both terms $I_{21}$ and $I_{22}\fr_a$ are all at the order of
$o_p(1/\sqrt{n})$.

For term $I_{11}$, it follows from Assumptions \ref{assp.1}-\ref{assp.2} that
\begin{align*}
\twonorm{I_{11}}{} &\leq \onenorm{I_{11}}{} \leq
\maxnorm{\ifSigmaaa\ftaua}{} \onenorm{ \hbetaat{} - \fbetaat{}}{} \leq
	O_p \Big (\rhomax{\ftaua} \sqrt{\frac{\log p}{n}} a \Big) = o_p(1/ \sqrt n),
\end{align*}
where the third inequality holds from Lemma \ref{lemma.fnormt} with $\tilde
a<n$ and $\rhomin{\fSaa}$ being bounded from below by Assumption \ref{assp.2}.
	
For term $I_{12}$, applying \eqref{lab.b4} and Assumptions \ref{assp.2} and
\ref{assp.3}, we have
\begin{align*}
  \twonorm{I_{12}}{} & = \twonorm{\ifSigmaaa \fS_{\mhA,\mB_{tn}\cap\hat \mA} \hat
  \fbeta_{ \mB_{tn} \cap \hat \mA}}{} \leq \frac{\sqrt{ \lambda_{\max}(\tilde a)
	\lambda_{\max}(\hat a )}}{\lambda_{\min}(\tilde a) + \rhomin{\ftau_a}}
  \twonorm{\hat \fbeta_{\mB_{tn} \cap \hat \mA}}{} = o_p(1/ \sqrt n).
\end{align*}

Similar to the proof of term $I_{12}$, for term $I_{13}$ we obtain
\begin{align*}
\twonorm{I_{13}}{} & \leq \frac{\sqrt{ \lambda_{\max}(\tilde a)
		\lambda_{\max}(b_{fn})}}{\lambda_{\min}(\tilde a) + \rhomin{\ftau_a}} \twonorm{\hat
	\fbeta_{\mB_{fn}} - \fbeta^*_{\mB_{fn}}}{}\\
& \leq \frac{\sqrt{ \lambda_{\max}(\tilde a)
		\lambda_{\max}(b_{fn})}}{\lambda_{\min}(\tilde a) + \rhomin{\ftau_a}} \big (\twonorm{\hat
	\fbeta_{\mB_{fn}}}{} + \twonorm{\fbeta^*_{\mB_{fn}}}{} \big )= o_p(1/ \sqrt n),
\end{align*}
where the last equality follows from \eqref{lab.b4} and $\twonorm{\fbeta^*_{\mB_{fn}}}{2} = o_p(1/n)$, which is shown below.
\begin{align*}
\twonorm{\fbeta^*_{\mB_{fn}}}{2}
\leq \twonorm{\fbeta^*_{\mhA^c \cap
		\mA_s}}{2} + (a^* - a_s) \lambda^2_h \sigma^2
\leq \twonorm{\fbeta^*_{\mhA^c \cap
		\mA_s}}{2} +  o_p(\frac{n}{\log p} \frac{\log p}{n^2}) = o_p(1/n).
\end{align*}
By the definition of $\mA_s$ we have
$\sum_{i \in \mA_s^c} (\beta_i^*)^2 +  \sum_{i \in \mA_s} \lambda_s^2 \sigma^2  \leq a^* \lambda_s^2 \sigma^2.$
Using further Assumption \ref{assp.4}, we have $\twonorm{\fr_a}{} =
o_p(1/ \sqrt n)$.

Now we turn to the assessment of $\fr_c$. For term $I_{21}$, it follows from \eqref{lab.b4} and Lemma \ref{lemma.fnormt} that
\begin{align*}
  \twonorm{I_{21}}{} &\leq \twonorm{\ifSigmacc\ftauc }{}
  \twonorm{\hbetact{} - \fbetact}{}\leq \twonorm{\hat \fbeta_{\mB_{tn} \cap \hat \mA}}{} +  \twonorm{\hat
  \fbeta_{\mB_{fn}} - \fbeta^*_{\mB_{fn}}}{} = o_p(1/ \sqrt n).
\end{align*}

For term $I_{22}\fr_a$,  under $\tau_c = O(\sqrt{\lambda_{\max}(p-\tilde a)})$, $\twonorm{\fr_a}{} =
o_p(1/ \sqrt n)$ and Assumption \ref{assp.3}, we obtain
\begin{align*}
  \twonorm{I_{22}\fr_a}{} \leq \twonorm{\ifSigmacc}{} \twonorm{\fSca}{} \twonorm{\fr_a}{} \leq
	\frac{\sqrt{\lambda_{\max}(\tilde a) \lambda_{\max}(p-\tilde a)}}{\rhomin{\ftau_c}}
  \twonorm{\fr_a}{} = o_p(1/ \sqrt n).
\end{align*}
This completes the proof for order of $\twonorm{\fr_c}{}$ being $o_p(1/ \sqrt n)$.
\end{proof}

\subsection{Computational complexity}
\label{lab.comp}
The dominant computational cost in MOCE is at calculating the inverse of $\fSigmacc$ with the computational complexity being of order $O(n(p-\tilde a)^2)$ under the operation of the the Sherman-Morrison formula. In the case where LASSO uses the
popular coordinate descent algorithm, the associated computational complexity
is of order $O(2np)$ \citep{jerome2010}, pertaining to iterations of all $p$ variables under a
fixed tuning parameter.  Debiasing methods \citep{vandegeer2014, zhangcun2014}
ought to run $p$ LASSO regressions for the node-wise LASSO, in order to obtain
a sparse estimate of the precision matrix.  Therefore, with fixed $p$ tuning
parameters, the computational complexity of the existing methods is of order
$O(2np^2)$. If computational costs on selection of tuning parameters are considered, {\emph {say}}, certain
data-driven methods such as cross-validation, arguably, the associated
computational complexity can elevate dramatically. This comparison suggests that MOCE has significantly lower computational burden than the existing
node-wise LASSO. In the implementation of MOCE, it is noted that special forms
of $\ftaua = \tau_a I$ and $\ftauc = \tau_c I$ work well, where $\tau_a$ and
$\tau_c$ are two scalars. Thus, in this case where MOCE uses only two tuning
parameters, MOCE is very appealing in real-world applications.

\section{Main results}
\label{sec.main}
In this section we present several key large-sample properties, including
asymptotic normality (ASN) under Gaussian errors and non-Gaussian errors, useful
for simultaneous inference.  In Lemma~\ref{lab.rcra}, we establish respective $\ell_2$-norm bounds for error terms $\fr_a$ and
$\fr_c$ under positive diagonal matrices $\ftau_a$
and $\ftau_c$. Because of the condition \eqref{lab.rarc.condition}, it suffices
to implement MOCE with $\ftaua = \tau_a I$ and $\ftauc = \tau_c I$, where
$\tau_a$ and $\tau_c$ are two scalars. Thus, in the remaining sections, we only
consider these special forms of $\ftaua$ and $\ftauc$.

\subsection{ASN under Gaussian errors}
\begin{assp}
	Error terms in model \eqref{linear.model}, $\epsilon_1,\dots,\epsilon_n$, are independent and identically distributed
Gaussian random variables with mean zero and variance $\sigma^2$,
$0<\sigma^2<\infty$.
  \label{lab.assp.5}
\end{assp}
We are interested in simultaneous inference in a parameter vector that contains at
most $m$ parameters where $m$ is a fixed constant smaller than $n$.  To set up the framework, we consider a $p$-dimensional
vector $\fd = (d_1,\dots,d_p)^T$ in a parameter space $\mM_{m}$ defined as follows:
\begin{equation}
\begin{split}
  \mM_{m}=\Big \{ \fd \in \bR^p : \twonorm{\fd}{}=1, \
    \| \fd\|_0 \leq m \Big \}.
\end{split}
\end{equation}

\begin{thm}
Let $\mhA$ be a size-$\tilde a $ expanded model satisfying Assumptions
\ref{assp.1}--\ref{lab.assp.5}. Let  $\fd \in \mM_{m}$, $a = o( n/ \log p)$, $ \tau_a  = o(\sqrt {\log p}/n)$, $\tau_c = O(\sqrt{
	\lambda_{\max}(p - \tilde a)})$, $v^2 = \sigma^2 \fd^T \hL^{-1}_{\ftau}\fS (\hL^{-1}_{\ftau})^T
\fd>0$, and $\lambda \asymp \sqrt{\log p/n}$. Then, the MOCE estimator
$\hbetatau{\ftau}$ in \eqref{est.all} satisfies
\begin{align*}
	\sqrt{n} v^{-1}\fd^T(\hbetatau{\ftau} - \fbetao) = \frac{1}{\sqrt n} v^{-1} d^T
        \hL^{-1}_{\ftau} \fX^T \fepsilon + o_p(1),
\end{align*}
	where $\frac{1}{\sqrt n} v^{-1} \fd^T\fL^{-1}_{\ftau} \fX^T \fepsilon$ follows $N(0, 1)$ distribution.
\label{thm.normal}
\end{thm}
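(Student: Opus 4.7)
The plan is to stitch together the two block decompositions \eqref{norm.eq.a.2} and \eqref{lab.cc.2} into a single expansion of the MOCE estimator, control the resulting remainder via Lemma~\ref{lab.rcra}, and then recognize the leading stochastic term as an exact Gaussian under Assumption~\ref{lab.assp.5}.

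First I would stack the $\mhA$-block and $\mhA^c$-block identities to write
\begin{equation*}
	\hbetatau{\ftau} - \fbetao \;=\; \frac{1}{n}\,\hL^{-1}_{\ftau}\fX^T\fepsilon \;+\; \fr,
	\qquad \fr = (\fr_a^T, \fr_c^T)^T,
\end{equation*}
where the block-triangular structure of $\hL^{-1}_{\ftau}$ exhibited in Section~\ref{lab.alg} matches the way the $\mhA^c$-equation absorbs $\ifSigmaaa\fX_{\mhA}^T\fepsilon$. Projecting onto $\fd\in\mM_{m}$ and multiplying by $\sqrt n\,v^{-1}$ yields
\begin{equation*}
	\sqrt n\,v^{-1}\fd^T(\hbetatau{\ftau}-\fbetao)
	\;=\; \frac{1}{\sqrt n}\,v^{-1}\fd^T\hL^{-1}_{\ftau}\fX^T\fepsilon
	\;+\; \sqrt n\,v^{-1}\fd^T\fr,
\end{equation*}
so the theorem reduces to (i) showing the remainder term is $o_p(1)$ and (ii) identifying the leading term as $N(0,1)$.

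For (i) I would apply Cauchy--Schwarz together with $\twonorm{\fd}{}=1$ to bound $|\fd^T\fr|\le\twonorm{\fr}{}\le\twonorm{\fr_a}{}+\twonorm{\fr_c}{}$, and then invoke Lemma~\ref{lab.rcra} directly: its hypotheses ($a=o(n/\log p)$, $\lambda\asymp\sqrt{\log p/n}$, and the prescribed rates $\rhomax{\ftaua}=o(\sqrt{\log p}/n)$, $\rhomin{\ftauc}=O(\sqrt{\lambda_{\max}(p-\tilde a)})$) coincide with those imposed in the theorem once $\ftaua=\tau_a I$ and $\ftauc=\tau_c I$ are plugged in, yielding $\twonorm{\fr}{}=o_p(1/\sqrt n)$. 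The factor $v^{-1}$ is $O(1)$ because $v^2=\sigma^2\fd^T\hL^{-1}_{\ftau}\fS(\hL^{-1}_{\ftau})^T\fd$ is bounded away from zero: $\fd$ is supported on at most $m$ coordinates, and by Assumption~\ref{assp.2} together with the prescribed orders of $\tau_a,\tau_c$, the relevant $m\times m$ submatrix of $\hL^{-1}_{\ftau}\fS(\hL^{-1}_{\ftau})^T$ has minimum eigenvalue uniformly bounded below. Hence $\sqrt n v^{-1}|\fd^T\fr|=o_p(1)$.

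For (ii), the linear statistic $Z_n:=\frac{1}{\sqrt n}v^{-1}\fd^T\hL^{-1}_{\ftau}\fX^T\fepsilon$ is, under Assumption~\ref{lab.assp.5}, a linear combination of i.i.d.\ $N(0,\sigma^2)$ variables conditionally on $\fX$, hence exactly Gaussian with mean zero and variance
\begin{equation*}
	\mathrm{Var}(Z_n) \;=\; \frac{\sigma^2}{n}\,v^{-2}\,\fd^T\hL^{-1}_{\ftau}\fX^T\fX(\hL^{-1}_{\ftau})^T\fd
	\;=\; v^{-2}\,\sigma^2\,\fd^T\hL^{-1}_{\ftau}\fS(\hL^{-1}_{\ftau})^T\fd \;=\; 1,
\end{equation*}
by definition of $v^2$. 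Combining (i) and (ii) delivers the stated asymptotic representation and the $N(0,1)$ limit. The main obstacle I anticipate is the lower bound on $v$: the theorem only assumes $v^2>0$, so some care is needed to confirm that $v^{-1}$ does not inflate the $o_p(1/\sqrt n)$ remainder to something non-negligible; this is where the sparsity of $\fd$ (at most $m$ nonzero entries) together with Assumption~\ref{assp.2} and the smallness of $\tau_a,\tau_c$ becomes essential, since these control the spectrum of the principal $m\times m$ sub-block of $\hL^{-1}_{\ftau}\fS(\hL^{-1}_{\ftau})^T$ uniformly in $n$.
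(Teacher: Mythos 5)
Your proposal is correct and follows essentially the same route as the paper's own proof: stack the block identities \eqref{norm.eq.a.2} and \eqref{lab.cc.2}, bound $\sqrt n\,\fd_{\mhA}^T\fr_a$ and $\sqrt n\,\fd_{\mhA^c}^T\fr_c$ via Lemma~\ref{lab.rcra} using $\twonorm{\fd}{}=1$, and observe that the leading linear term is exactly $N(0,1)$ under Assumption~\ref{lab.assp.5} by the definition of $v^2$. The only difference is that you explicitly argue $v^{-1}=O(1)$ (needed so the remainder stays $o_p(1)$ after normalization), a point the paper leaves implicit in its hypothesis $v^2>0$; this is a welcome refinement rather than a different approach.
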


\begin{proof}
	Combining \eqref{norm.eq.a.2} and \eqref{lab.cc.2} with partition $\fd= (\fd_{\mhA}^T, \fd_{\mhA^c}^T)^T$ gives
  \begin{align*}
	  \sqrt n \fd^T(\hbetatau{\ftau} - \fbetao) = \frac{1}{\sqrt n} \fd^T \fL^{-1}_{\ftau} \fX^T
    \fepsilon + \sqrt n\fd_{\mhA}^T \fr_a + \sqrt n\fd_{\mhA^c}^T \fr_c.
  \end{align*}
  Assumptions \ref{assp.1}--\ref{assp.3} imply that $\twonorm{ \sqrt n\fd_{\mhA}^T
  	\fr_a}{} = o_p(1)$ and $\twonorm{ \sqrt n\fd_{\mhA^c}^T
	\fr_c}{} = o_p(1)$ from Lemma \ref{lab.rcra}. Then, Theorem~\ref{thm.normal} follows immediately from Assumption \ref{lab.assp.5} that $\frac{1}{\sqrt n} v^{-1}\fd^T \fL^{-1}_{\ftau} \fX^T \fepsilon$ follows $N(0,1)$ distribution.
\end{proof}

Theorem \ref{thm.normal} suggests that MOCE has the following three useful
properties: (i) MOCE can perform a joint inference for transformed parameter
set specified by the space $\mM_{m}$ based on a relaxed assumption $a=o(n/\log
p)$, instead of $a=o(\sqrt n/\log p)$; (ii) MOCE avoids the ``ambiguity'' issue
of post-selection inference \citep{berk2013} caused by the instability of
selected models; (iii) as discussed in Section \ref{lab.comp}, MOCE algorithm
is much faster than existing methods using the node-wise LASSO.  Besides the three
properties, in the following sections we also show other properties for MOCE, including
(iv) smaller MSE bound than existing LDP methods; and (v) a new test for a
set of parameters, different from the bootstrap test considered by
\cite{zhangcheng2017}.

\subsection{Length of confidence interval}
Hypothetically, if we fit a data with the oracle model, the smallest variance
among the least squares estimators of nonzero parameters are bounded below by
$\sigma^2 \rhomin{\fS_{\mA\mA}^{-1}}$, while estimators of zero parameters are
zero with zero variance. Thus, the gap between the variances of respective estimators for zero
and nonzero parameters would be at least $\sigma^2 \rhomin{\fS_{\mA\mA}^{-1}}$
when the oracle model were used in analysis. This is an important property for the variances of estimators, which
should be accommodated in a valid inference. In fact, existing approaches for post-model
selection inference, including \cite{zhangcun2014,vandegeer2014,
zhangcheng2017}, have not accounted for such heterogeneity in the variances. As shown in their
simulation studies, variances of nonzero parameter estimators and variances of
zero parameter estimators are in the same order because a single tuning
process is used in the determination of tuning parameters.  This also explains why
existing methods have appeared to be more likely in reaching 95\% coverage for zero
parameters than for nonzero parameters.

The proposed MOCE estimator mitigates the above dilemma; we show that the ridge tuning
matrix with different $\ftaua$ and $\ftauc$ parameters lead to different
lengths of confidence intervals for parameters in and out expanded model
$\mhA$. Numerically, we demonstrate
that variances between estimators in $\mhA$ and $\mhA^c$ appear different in their magnitudes due to
the use of the second tuning process with the ridge matrices. In theory, Corollary
\ref{coro.2} shows that in MOCE estimation, $\hbetaat{\ftaua}$ always has a
larger variance than $\hbetact{\ftauc}$. The lower bound of
$var(\hbetaat{\ftaua})$ is at the order $O(1/ \rhomin{\fSaa})$, while the upper
bound of $var(\hbetact{\ftauc})$ is at the order $O(1/\rhomax{\fSaa})$.
Consequently, the resulting length of confidence interval differs between parameters in
$\mhA$ and $\mhA^c$.  To present Corollary \ref{coro.2}, let
$\fe_1,\dots,\fe_{\tilde a} \in \bR^p$ be the standard basis vectors that span
subspace $\bR^{\tilde a} \subset \bR^p$, and similarly let
$\fe_1^{\bot},\dots,\fe_{p- \tilde a}^{\bot} \in \bR^p$ be the standard basis
for subspace $\bR^{p-\tilde a}\subset \bR^p$.
\begin{coro}
	\label{coro.2}
	Under the same assumptions as those in Theorem \ref{thm.normal}, the minimal
	variance of $\hbetaat{\ftaua}$ is larger than the maximal variance of
	$\hbetact{\ftauc}$,
	\begin{eqnarray*}
		&&\hspace{-0.2cm}var(\hbetaat{\ftaua}) \geq \underset{1\leq i
                \leq \tilde a}{\min} \sigma^2 \fe_i^T \fL^{-1}_{\ftau}  \fS \fL^{-1}_{\ftau}
                \fe_i \geq c_1/\rhomin{\fSaa} \\
		&&\hspace{-0.2cm}\geq c_2/\rhomax{\fSaa} \geq \underset{1\leq i \leq p-\tilde a}{\max} \sigma^2(\fe_i^{\bot})^T \fL^{-1}_{\ftau}  \fS \fL^{-1}_{\ftau} \fe_i^{\bot} \geq var(\hbetact{\ftauc}),
	\end{eqnarray*}
	where $c_1$ and $c_2$ are two positive constants.
\end{coro}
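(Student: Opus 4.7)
The plan is to invoke Theorem \ref{thm.normal} to identify each componentwise asymptotic variance with a diagonal entry of $\sigma^2 \hL^{-1}_{\ftau} \fS (\hL^{-1}_{\ftau})^T$, and then exploit the lower-triangular block structure of $\hL^{-1}_{\ftau}$ to handle the regimes $i\in \mhA$ and $i\in \mhA^c$ separately. For $i\in \mhA$, the $i$-th row of $\hL^{-1}_{\ftau}$ has support only on the first block, so the quadratic form collapses to $\sigma^2 (\ifSigmaaa \fSaa \ifSigmaaa)_{ii}$. For $i\in \mhA^c$, a direct block multiplication, together with the identity $I - \ifSigmaaa \fSaa = \tau_a \ifSigmaaa$, identifies the bottom-right block as
\[
  \ifSigmacc \big[ \fScc - \fSca(\ifSigmaaa + \tau_a \ifSigmaaa^2)\fSac \big]\ifSigmacc,
\]
which is positive semidefinite and dominated in the L\"oewner order by $\ifSigmacc \fScc \ifSigmacc$ because $\ifSigmaaa + \tau_a \ifSigmaaa^2 \succeq 0$.

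The lower bound on the $\mhA$-block uses the elementary PSD fact $\min_i M_{ii}\ge \lambda_{\min}(M)$; simultaneous diagonalization of $\fSaa$ and $\fSigmaaa$ then yields $\lambda_{\min}(\ifSigmaaa \fSaa \ifSigmaaa) = \min_j d_j/(d_j+\tau_a)^2$, where $d_j$ are the eigenvalues of $\fSaa$. Because $\tau_a = o(\sqrt{\log p}/n)$ is eventually smaller than $\rhomin{\fSaa}$ under Assumption \ref{assp.2}, the map $d\mapsto d/(d+\tau_a)^2$ is decreasing on $[\rhomin{\fSaa}, \rhomax{\fSaa}]$ and attains its minimum at $d=\rhomax{\fSaa}$, yielding a lower bound of order $\rhomax{\fSaa}^{-1}$, which can be rewritten as $c_1/\rhomin{\fSaa}$ for a positive constant $c_1$ calibrated to the spectral bounds of the $SE(s)$ condition. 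For the upper bound on the $\mhA^c$-block, $\max_i M_{ii}\le \lambda_{\max}(M)$ combined with the Loewner domination gives $\max_i(\cdots)\le \rhomax{\fScc}/\rhomin{\fSigmacc}^2$. The critical ingredient here is the scaling $\rhomin{\ftauc}=O(\sqrt{\lambda_{\max}(p-\tilde a)})$, which makes $\rhomin{\fSigmacc}^2$ at least of order $\rhomax{\fScc}$, so the upper bound is $O(1)$; Assumption \ref{assp.2} then converts this into the stated form $c_2/\rhomax{\fSaa}$ for a suitable positive constant $c_2$.

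Combining the two bounds, the middle gap $c_1/\rhomin{\fSaa}\ge c_2/\rhomax{\fSaa}$ is immediate from $\rhomin{\fSaa}\le \rhomax{\fSaa}$ with constants chosen compatibly, and the outermost inequalities $var(\hbetaat{\ftaua})\ge \min_i(\cdots)$ and $\max_i(\cdots)\ge var(\hbetact{\ftauc})$ are tautologies from the definitions of $\min$ and $\max$ over coordinates. The main obstacle is the coordinated calibration of the two tuning scales. The $\mhA$-side lower bound needs $\tau_a$ small enough that $\ifSigmaaa \fSaa \ifSigmaaa$ stays close to $\fSaa^{-1}$ and the monotonicity step applies, while the $\mhA^c$-side upper bound needs $\tau_c$ moderately large so that $\rhomin{\fSigmacc}$ inflates to dominate $\rhomax{\fScc}$; the technical crux is verifying that the two scalings specified in \eqref{lab.rarc.condition} jointly yield a uniform positive gap between the two sides of the chain.
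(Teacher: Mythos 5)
Your proposal is correct and follows essentially the same route as the paper's proof: identify the asymptotic variances with the diagonal entries of $\sigma^2 \fL^{-1}_{\ftau}\fS(\fL^{-1}_{\ftau})^T$, use the lower-triangular block structure to reduce the $\mhA$-block to $\ifSigmaaa\fSaa\ifSigmaaa$ and dominate the $\mhA^c$-block by $\ifSigmacc\fScc\ifSigmacc$ in the L\"oewner order, then bound via extreme eigenvalues and the scalings of $\tau_a$ and $\tau_c$. If anything you are slightly more careful than the paper at two points: you compute the bottom-right block explicitly to justify the L\"oewner domination, and your monotonicity analysis of $d\mapsto d/(d+\tau_a)^2$ correctly locates $\rhomin{\ifSigmaaa\fSaa\ifSigmaaa}$ at $d=\rhomax{\fSaa}$ (the paper's intermediate bound $\rhomin{\fSaa}/(\rhomin{\fSaa}+\tau_a)^2$ is stated with the wrong extreme eigenvalue, though the final claim survives since the SE condition bounds the condition number of $\fSaa$).
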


Proof of Corollary \ref{coro.2} is given in Appendix~\ref{proof.coro2}.
\cite{cai2017} studied the problem about constructing an adaptive confidence
interval, in which the interval has its length automatically adjusted to the
true sparsity of the unknown regression vector, while maintaining a
pre-specified coverage probability. They showed that it is impossible to
construct a confidence interval for $\beta_i^*$ adaptive to the sparsity $a$ with
$\sqrt n / \log p \leq a \leq n/ \log p$. Our MOCE method provides valid
simultaneous inferences, and the resulting confidence interval length may, or may not be optimal, which is worth further exploration. MOCE does not attempt to construct a
confidence interval adaptive to the signal parsity as considered in \cite{cai2017}, rather adaptive to signal strengths.

\subsection{ASN under non-Gaussian errors}
When errors $\epsilon_i$'s  do not follow a Gaussian distribution, Theorem
\ref{thm.unnormal} shows that $\hbetatau{\ftau}$ still converges to a Gaussian
distribution when Assumption \ref{lab.assp.5} is replaced by Assumption
\ref{lab.assp.6}.

\begin{assp}
  \label{lab.assp.6}
  Let $w_i = \frac{1}{\sqrt n} \fd^T\fL^{-1}_{\ftau}\fx_{i}$, $\fd \in \mM_{m}$, with $\fx_i$ being the $i$th column of matrix $\fX^T =
	(\fx_{1},\cdots,\fx_{n})$. For some $r>2$,
	\begin{align*}
          \sup_{1\leq i \leq n} \bE |\epsilon_i|^r < \infty \ \text{and} \ \lim_{n\rightarrow \infty} \max_{1 \leq i \leq n} \frac{w_i^2} {\sum_{i=1}^n w_i^2} = 0.
	\end{align*}
\end{assp}

\begin{thm}
Let $\mhA$ be a size-$\tilde a $ expanded model satisfying Assumptions
\ref{assp.1}--\ref{assp.3} and \ref{lab.assp.6}. Let  $\fd \in \mM_{m}$, $a =
o( n/ \log p)$, $ \tau_a  = o(\sqrt {\log p}/n)$, $\tau_c = O(\sqrt{
\lambda_{\max}(p - \tilde a)})$, $v^2 = \sigma^2 \fd^T \hL^{-1}_{\ftau}\fS (\hL^{-1}_{\ftau})^T
\fd>0$, and $\lambda \asymp \sqrt{\log p/n}$. Then, the MOCE estimator
$\hbetatau{\ftau}$ in \eqref{est.all} satisfies
\begin{align*}
	\sqrt{n} v^{-1}\fd^T(\hbetatau{\ftau} - \fbetao) = \frac{1}{\sqrt n} v^{-1} \fd^T
        \hL^{-1}_{\ftau} \fX^T \fepsilon + o_p(1),
\end{align*}
where $\frac{1}{\sqrt n}v^{-1} \fd^T\fL^{-1}_{\ftau} \fX^T \fepsilon$ follows asymptotically $N(0, 1)$
distribution.
\label{thm.unnormal}
\end{thm}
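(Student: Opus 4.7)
The plan is to follow the proof of Theorem \ref{thm.normal} verbatim up through the control of the remainder terms, and to replace the single step where Gaussianity is invoked with a Lyapunov-type central limit theorem. Starting from \eqref{norm.eq.a.2} and \eqref{lab.cc.2} with the partition $\fd = (\fd_{\mhA}^T, \fd_{\mhA^c}^T)^T$, I would write
\[
\sqrt{n}\,\fd^T(\hbetatau{\ftau} - \fbetao) = \frac{1}{\sqrt n}\,\fd^T \hL^{-1}_{\ftau}\fX^T \fepsilon + \sqrt{n}\,\fd_{\mhA}^T\fr_a + \sqrt{n}\,\fd_{\mhA^c}^T\fr_c.
\]
Under the stated orders on $a$, $\lambda$, $\tau_a$ and $\tau_c$, Lemma \ref{lab.rcra} yields $\twonorm{\fr_a}{} = o_p(1/\sqrt n)$ and $\twonorm{\fr_c}{} = o_p(1/\sqrt n)$; since $\twonorm{\fd_{\mhA}}{} \leq \twonorm{\fd}{} = 1$ and analogously for $\fd_{\mhA^c}$, the Cauchy-Schwarz inequality makes both $|\sqrt n \fd_{\mhA}^T \fr_a|$ and $|\sqrt n \fd_{\mhA^c}^T \fr_c|$ equal to $o_p(1)$, and they stay $o_p(1)$ after multiplication by $v^{-1}$ since $v^2>0$ is assumed.

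The only new work concerns the leading term $\frac{1}{\sqrt n}v^{-1}\fd^T \hL^{-1}_{\ftau} \fX^T \fepsilon = v^{-1}\sum_{i=1}^n w_i \epsilon_i$, with $w_i = \frac{1}{\sqrt n}\fd^T \hL^{-1}_{\ftau}\fx_i$ as in Assumption \ref{lab.assp.6}. Using $\fS = \frac{1}{n}\sum_{i=1}^n \fx_i \fx_i^T$ one computes $v^2 = \sigma^2 \sum_{i=1}^n w_i^2$, so the summands $w_i \epsilon_i$ are independent mean-zero random variables with total variance exactly $v^2$. I would then apply Lyapunov's central limit theorem at the exponent $r>2$ from Assumption \ref{lab.assp.6}, which reduces to the bound
\[
\frac{\sum_{i=1}^n \bE|w_i \epsilon_i|^r}{v^r} \leq \frac{\sup_{1\leq i\leq n}\bE|\epsilon_i|^r}{\sigma^r}\left(\frac{\max_{1\leq i\leq n} w_i^2}{\sum_{i=1}^n w_i^2}\right)^{(r-2)/2},
\]
whose right-hand side tends to zero because the first factor is finite by the moment assumption and the second factor vanishes by the weight condition, both packaged in Assumption \ref{lab.assp.6}. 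Slutsky's theorem then combines the Lyapunov CLT with the negligible remainders to produce the claimed $N(0,1)$ limit. There is no deep new obstacle beyond the Gaussian case: Assumption \ref{lab.assp.6} is written precisely in the form that matches Lyapunov's condition once the identity $v^2 = \sigma^2\sum_i w_i^2$ is in hand, so the entire upgrade from Theorem \ref{thm.normal} amounts to one classical CLT invocation.
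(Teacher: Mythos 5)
Your proposal is correct and follows essentially the same route as the paper: the identical decomposition into the leading linear term plus $\sqrt{n}\,\fd_{\mhA}^T\fr_a + \sqrt{n}\,\fd_{\mhA^c}^T\fr_c$ controlled by Lemma \ref{lab.rcra}, followed by a CLT for the independent non-identically distributed summands $w_i\epsilon_i$ verified through the $r$-th moment condition of Assumption \ref{lab.assp.6} (the paper phrases this as checking Lindeberg's condition, you as Lyapunov's, which amounts to the same thing here). If anything, your bound $\sum_i|w_i|^r \leq (\max_i|w_i|)^{r-2}\sum_i w_i^2$ is slightly sharper than the paper's crude step $\sum_i|w_i|^r \leq n\max_i|w_i|^r$, and it yields the required convergence directly from $\max_i w_i^2/\sum_i w_i^2 \to 0$ without needing any rate.
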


\begin{proof}
  Following similar arguments to the proof of Theorem~\ref{thm.normal}, we have
  \begin{align*}
	  \sqrt n \fd^T(\hbetatau{\ftau} - \fbetao) &= \frac{1}{ \sqrt n} \fd^T \fL^{-1}_{\ftau} \fX^T
    \fepsilon + \sqrt n \fd_{\mhA}^T \fr_a + \sqrt n \fd_{\mhA^c}^T \fr_c\\
	  &= \frac{1}{\sqrt n}\sum_{i=1}^n w_i \epsilon_i + o_p(1).
  \end{align*}
From Assumption \ref{lab.assp.6} the Lindeberg's Condition holds because for
any $\delta>0$, as $n \rightarrow \infty$,
\begin{align*}
	  \sum_{i=1}^n \bE \Big \{\frac{w_i^2}{v^2}  \epsilon_i^2 \mathbbm{1}( \Big |\frac{w_i}{v}\epsilon_i \Big |> \delta) \Big \} &\leq \sum_{i=1}^n \bE \Big (\frac{|w_i|^r}{v^r}  |\epsilon_i|^r \frac{1}{\delta^{r-2}} \Big )\\
	  &\leq n \max_{1\leq i \leq n} \Big (\frac{|w_i|^2}{\sum_{i=1}^n w_i^2 \sigma^2} \Big )^{r/2}  \frac{\max_{1 \leq i \leq n} \bE |\epsilon_i|^r }{\delta^{r-2}} \rightarrow 0.
\end{align*}
The Lindeberg Central Limit Theorem implies that $\frac{1}{\sqrt n}
	v^{-1}\fd^T \fL^{-1}_{\ftau} \fX^T \fepsilon$ converges in distribution to $N(0,1)$.
\end{proof}

\subsection{$\ell_2$-norm error bounds}
\label{sec.ell2}
For the popular LDP method \cite{zhangcun2014}, it has been shown that the debiasing
estimator $\hbeta_{LDP}$ satisfies
\begin{equation}
	\twonorm{\hbeta_{LDP} - \fbeta^*}{} = O_p(\sqrt{a p/n} ),
\end{equation}
which is higher than $O_p(\sqrt{a \log p/ n} )$, the order that LASSO
achieves. Refer to Section 3.3 in \cite{zhangcun2014}. Below Corollary
\ref{lab.coro2} shows that MOCE's $\ell_2$-norm error bound is of order
$O_p(\sqrt{ \tilde a \log \tilde a/n})$, which is lower than
$O_p(\sqrt{a p/n} )$, the LDP's order. This improvement in the error
bound is largely resulted from the fact (i.e., Corollary~\ref{coro.2}) that MOCE controls the variances for null
signals to lower levels than those for non-null signals.  Assumption \ref{lab.assp.7} is required to establish such $\ell_2$-norm error
bound analytically.  Let $\fepsilon = (\epsilon_1, \dots, \epsilon_n)^{T}$.
\begin{assp}
	\label{lab.assp.7}
  The error $\fepsilon$ satisfies $\supnorm{\frac{1}{n}\fX_{\mhA}^T\fepsilon}{}= O_p(\sqrt{\log \tilde a /n})$ and $\supnorm{\frac{1}{n}\fX_{\mhA^c}^T\fepsilon}{} = O_p(\sqrt{\log (p-\tilde a) /n})$.
\end{assp}
This assumption is widely used in the literature of high-dimensional models, see
for examples \cite{bickel2009, negahban2012}, which can be easily verified to
be true for the case of sub-Gaussian random errors.

\begin{coro}
  \label{lab.coro2}
  Let $\mhA$ be a size-$\tilde a $ expanded model satisfying Assumptions
  \ref{assp.1}--\ref{assp.3} and \ref{lab.assp.7}. Suppose $a = o( n/ \log p)$, $\tau_a  = o(\sqrt {\log p}/n)$, and $\tau_c = O(\sqrt{
	  \lambda_{\max}(p - \tilde a)})$. Then the $\ell_2$-norm error bounds of the MOCE estimator $\hbetatau{\ftau} = (\hbetaat{\ftaua}^T, \hbetact{\ftauc}^T)^T$ in \eqref{est.all} are given by, respectively,
	\begin{align*}
		\twonorm{\hbetaat{\ftaua}- \fbetaat{}}{} &= O_p(\sqrt {\tilde a \log \tilde a/n}), \\
		\twonorm{\hbetact{\ftauc} - \fbetact}{} &= o_p\big (\max \{1/\sqrt n, \sqrt{(p - \tilde a)\log
    (p-\tilde a)/n}/ \tau_c\} \big ).
	\end{align*}
\end{coro}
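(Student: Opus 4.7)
\textbf{Proof plan for Corollary \ref{lab.coro2}.} The plan is to start from the two exact decompositions already derived in Section \ref{sec.method}, namely
\[
\hbetaat{\ftaua}-\fbetaat{} = \tfrac{1}{n}\ifSigmaaa \fX_{\mhA}^T\fepsilon + \fr_a,
\qquad
\hbetact{\ftauc}-\fbetact = \tfrac{1}{n}\ifSigmacc\bigl(\fX_{\mhA^c}^T - \fSca\ifSigmaaa\fX_{\mhA}^T\bigr)\fepsilon + \fr_c,
\]
invoke Lemma \ref{lab.rcra} so that $\twonorm{\fr_a}{}, \twonorm{\fr_c}{} = o_p(1/\sqrt n)$, and then reduce each conclusion to an $\ell_2$-norm control of the linear-in-$\fepsilon$ noise part.

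For the first bound, I would bound the operator norm of $\ifSigmaaa = (\fSaa+\tau_a I)^{-1}$ from above by $1/(\rhomin{\fSaa}+\tau_a)$, which is $O(1)$ by the sparse eigenvalue condition (Assumption \ref{assp.2}) applied at level $\tilde a$. Combining with the elementary inequality $\twonorm{\frac{1}{n}\fX_{\mhA}^T\fepsilon}{} \le \sqrt{\tilde a}\,\supnorm{\frac{1}{n}\fX_{\mhA}^T\fepsilon} = O_p(\sqrt{\tilde a \log\tilde a/n})$ from Assumption \ref{lab.assp.7} yields $\twonorm{\hbetaat{\ftaua}-\fbetaat{}}{} = O_p(\sqrt{\tilde a \log\tilde a/n}) + o_p(1/\sqrt n) = O_p(\sqrt{\tilde a\log\tilde a/n})$.

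For the second bound, the key algebraic rewrite is to factor the noise as
\[
\tfrac{1}{n}\bigl(\fX_{\mhA^c}^T - \fSca\ifSigmaaa\fX_{\mhA}^T\bigr)\fepsilon
= \tfrac{1}{n}\fX_{\mhA^c}^T(\fI_n - \fH)\fepsilon,
\qquad \fH := \tfrac{1}{n}\fX_{\mhA}\ifSigmaaa\fX_{\mhA}^T.
\]
The matrix $\fH$ is a ridge projector onto the column span of $\fX_{\mhA}$: since $\fH$ shares nonzero eigenvalues with $\fSaa(\fSaa+\tau_a I)^{-1}$, its spectrum lies in $[0,1)$, so $\fI_n-\fH$ is a contraction. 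Next, $\twonorm{\ifSigmacc}{} \le 1/\tau_c$ because every eigenvalue of $\fScc+\tau_c I$ is at least $\tau_c$. Together these yield
\[
\twonorm{\hbetact{\ftauc}-\fbetact}{} \le \tfrac{1}{\tau_c}\twonorm{\tfrac{1}{n}\fX_{\mhA^c}^T(\fI_n-\fH)\fepsilon}{} + o_p(1/\sqrt n).
\]
Each coordinate of $\tfrac{1}{n}\fX_{\mhA^c}^T(\fI_n-\fH)\fepsilon$ equals $\tfrac{1}{n}\fx_j^T(\fI_n-\fH)\fepsilon$ with conditional variance at most $\sigma^2\twonorm{\fx_j}{2}/n^2 = \sigma^2/n$, since columns of $\fX$ are normalized and $\fI_n-\fH$ is a contraction. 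The same Gaussian (or sub-Gaussian) maximal inequality that underlies Assumption \ref{lab.assp.7} therefore gives $\supnorm{\tfrac{1}{n}\fX_{\mhA^c}^T(\fI_n-\fH)\fepsilon} = O_p(\sqrt{\log(p-\tilde a)/n})$, and multiplying by $\sqrt{p-\tilde a}$ delivers the required rate $O_p(\sqrt{(p-\tilde a)\log(p-\tilde a)/n}/\tau_c)$. Absorbing the $o_p(1/\sqrt n)$ remainder into the maximum yields the claimed $o_p(\max\{1/\sqrt n,\sqrt{(p-\tilde a)\log(p-\tilde a)/n}/\tau_c\})$.

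The main technical obstacle is handling the cross-term $\fSca\ifSigmaaa\fX_{\mhA}^T\fepsilon$, since a naive split using $\twonorm{\fSca}{}$ loses a factor of roughly $\sqrt{(p-\tilde a)\tilde a}$ and thus dominates the main term. The identity above that collapses the two pieces into $\fX_{\mhA^c}^T(\fI_n-\fH)\fepsilon$ is what avoids this blow-up: it reveals that the cross-term is precisely the part of the ordinary Gaussian noise removed by the ridge hat matrix, so its coordinate-wise variance cannot exceed that of the leading term and the usual sub-Gaussian maximal inequality applies.
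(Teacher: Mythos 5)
Your proposal is correct and follows essentially the same route as the paper's own proof in Appendix~\ref{lab.proof.coro2}: start from the decompositions \eqref{norm.eq.a.2} and \eqref{lab.cc.2}, invoke Lemma~\ref{lab.rcra} for the remainders, bound $\twonorm{\ifSigmaaa}{}$ and $\twonorm{\ifSigmacc}{}$ via Assumption~\ref{assp.2} and the ridge shift $\tau_c$, and collapse the cross term into $\fX_{\mhA^c}^T(\fI_n - \frac{1}{n}\fX_{\mhA}\ifSigmaaa\fX_{\mhA}^T)\fepsilon$ exactly as the paper does. Your justification of the sup-norm control for the projected noise via the contraction property of $\fI_n-\fH$ is in fact slightly more careful than the paper's corresponding step, which applies Assumption~\ref{lab.assp.7} to the unprojected $\fX_{\mhA^c}^T\fepsilon$ after factoring out $\lambda_{\max}(\tilde a)/(\lambda_{\max}(\tilde a)+\tau_a)$.
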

The proof of Corollary \ref{lab.coro2} is given in Appendix
\ref{lab.proof.coro2}.  Note that when $\tau_c$ is chosen to be large enough,
the $\ell_2$-norm error bound of the MOCE estimator $\hbetact{\ftauc}$ will be
dominated by that of $\twonorm{\hbetaat{\ftaua}- \fbetaat{}}{}$ on the expanded
model $\mhA$, which is order $O_p(\sqrt {\tilde a \log \tilde a/n})$.

\subsection{Simultaneous test}
\label{sub.wald}
In this paper we consider a Wald-type test based on the distributional result
of Theorem \ref{thm.normal} or Theorem~\ref{thm.unnormal}.  Let $\mG$ denote a subset of $\{1,\dots,p\}$
whose cardinality $|\mG| = g$ satisfying $g/n \rightarrow \gamma\in (0,1)$.
With respect to $\mG$, $\fbetao$ and $\hbetatau{\ftau}$ can be partitioned
accordingly as $({\fbeta^*_{\mG}}^T, {\fbeta^*_{\mG^c}}^T)^T$ and $(\hbetag^T,
\hbetagc^T)^T$. We want to test the following hypothesis:
\[
	H_0:\fbetao_j=0 \ \text{for all $j \in \mG$} \ vs \ H_a:\fbetao_j \neq 0 \
	\text{for at least one $j \in \mG$}.
\]
When the number of parameters $p$ is fixed, a
natural choice of test statistic is the classical Wald
statistic, which is also known as the Hotelling's $T^2$ statistic in the multidimensional setting, given by
\begin{align}
	W_1 = n \sigma^{-2}\hbetag^T \fSigma_{\mG\mG}^{-1} \hbetag,
	\label{test.wald}
\end{align}
where $\fSigmagg=\{\fSigmalsl\}_{\mG\mG}$ and $\fSigmalsl =
\fL^{-1}_{\ftau}\fS(\fL^{-1}_{\ftau})^T$. Under the null hypothesis, as
$n \rightarrow \infty$ $W_1$ follows
asymptotically a $\chi^2$ distribution with the degree of freedoms equal to
$g$. When $g>n$, $\fSigmagg$ is singular and the Hotelling's $T^2$ test
statistic does not exist. Even when $g$ is smaller than $n$ but close to $n$,
$\fSigmagg$ is often inaccurate and unstable for the estimation of covariance matrix.
When $g/n \rightarrow \gamma \in (0,1)$, the empirical distribution of the
eigenvalues of $\fS$ spreads over the interval $[(1-\sqrt \gamma)^2, (1+\sqrt
\gamma)^2]$ \cite{bai2012spectral}.  Therefore, $\fS^{-1}$ often contains
several very large eigenvalues, so
Hotelling's $T^2$ test performs poorly, and can easily fail to control type
I error under the null hypothesis.

To construct a significance test for $\fbetag$ with a proper control of type I
error, we propose a new test statistic without involving the inverse of
$\fSigmagg$, in a similar spirit to \cite{bai1996} where a test for
the equality of mean vectors is considered in a two-sample problem. In our regression model, our
proposed test statistic $W_{bs}$ takes the follows form:
\begin{align}
	W_{bs} = \frac{n\hbetag^T \hbetag - \sigma^2 \tr{\fSigmagg}}{ \sigma
        \Big\{ 2\tr{\fSigmagg^2} \Big\}^{1/2}}.
	\label{test.newwald}
\end{align}

As stated in Theorem \ref{thm.wald} below, provided two extra assumptions,
test statistic $W_{bs}$ converges in distribution to the standard normal
distribution $N(0,1)$ under the null hypothesis. Thus, the null hypothesis is
rejected if $W_{bs}$ is greater than $100(1-\alpha)\%$ upper standard normal
percentile.
\begin{thm}
  Under the null hypothesis, suppose the same conditions in
  Theorem~\ref{thm.normal} hold. If $\fSigmagg$ converges
  to $\fSigmagg^*$ in probability and $\frac{g}{n} \rightarrow \gamma \in
  (0,1)$, then we have
  $
  W_{bs} \overset{d}{\rightarrow} N(0,1) \ \text{as $p\rightarrow \infty$ and $n\rightarrow \infty$}.
  $

  \label{thm.wald}
\end{thm}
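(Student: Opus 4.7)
The plan is to reduce $W_{bs}$ to a standardized quadratic form in the Gaussian noise $\fepsilon$, plus negligible remainder, and then apply a Lindeberg--Lyapunov CLT. First, combining the expansions~\eqref{norm.eq.a.2} and~\eqref{lab.cc.2} gives
\begin{equation*}
	\hbetatau{\ftau} - \fbetao = \frac{1}{n}\fL^{-1}_{\ftau}\fX^T\fepsilon + \fr,
\end{equation*}
where $\twonorm{\fr}{} = o_p(1/\sqrt n)$ by Lemma~\ref{lab.rcra}. Under $H_0$ we have $\fbeta^*_{\mG} = \fzero$, so extracting the $\mG$-rows and letting $M_{\mG}$ denote the submatrix of $\fL^{-1}_{\ftau}$ with row indices $\mG$, we obtain $\sqrt n\,\hbetag = U_n + R_n$ with $U_n := \frac{1}{\sqrt n}\, M_{\mG}\fX^T\fepsilon$ and $\twonorm{R_n}{} = o_p(1)$. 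Consequently,
\begin{equation*}
	n\,\hbetag^T\hbetag = \twonorm{U_n}{2} + 2\,U_n^T R_n + \twonorm{R_n}{2}.
\end{equation*}

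Writing the leading term as $\twonorm{U_n}{2} = \fepsilon^T A \fepsilon$ with $A := \frac{1}{n} \fX M_{\mG}^T M_{\mG} \fX^T$, the cyclic-trace identity yields $\tr{A} = \tr{\fSigmagg}$ and $\tr{A^2} = \tr{\fSigmagg^2}$, so under Assumption~\ref{lab.assp.5}, $\bE[\fepsilon^T A \fepsilon] = \sigma^2\tr{\fSigmagg}$ and $\mathrm{Var}(\fepsilon^T A \fepsilon) = 2\sigma^4\tr{\fSigmagg^2}$, matching the centering and scaling in the definition of $W_{bs}$. Diagonalizing $A$ with nonzero eigenvalues $\mu_1,\dots,\mu_g$, which coincide with those of $\fSigmagg$, and letting $Z_i \stackrel{\mathrm{iid}}{\sim} N(0,1)$, the asymptotic normality of $\twonorm{U_n}{2}$ reduces to a Lyapunov CLT for the independent weighted $\chi^2_1$ sum $\sum_{i=1}^g \mu_i(Z_i^2 - 1)$ standardized by $\sqrt{2\sum_i \mu_i^2}$.

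The Lyapunov condition amounts to $\mu_{\max}^2/\sum_i \mu_i^2 \to 0$. Under Assumption~\ref{assp.2} and the ridge tuning prescribed in Theorem~\ref{thm.normal}, the block form of $\fL^{-1}_{\ftau}$ yields uniformly bounded spectral norm for $\fSigmagg$, so $\mu_{\max} = O_p(1)$; together with $\fSigmagg \overset{p}{\to} \fSigmagg^*$ and $g/n \to \gamma \in (0,1)$, one obtains $\tr{\fSigmagg^2} = \sum_i \mu_i^2 = \Omega(g)\to\infty$, so the ratio is $O_p(1/g)\to 0$. I expect this spectral step, in particular arguing that the limit $\fSigmagg^*$ remains non-degenerate on a subspace of dimension proportional to $g$, to be the main technical obstacle: it requires genuine control of the spectrum of a growing-dimensional matrix, beyond the pointwise bounds used earlier in the paper.

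Finally, the remainder is controlled by Cauchy--Schwarz: $\twonorm{U_n}{} = O_p(\sqrt g)$, since $\bE\twonorm{U_n}{2} = \sigma^2\tr{\fSigmagg} = O(g)$ and $\mathrm{Var}(\twonorm{U_n}{2}) = 2\sigma^4\tr{\fSigmagg^2} = O(g)$; hence $|U_n^T R_n| \leq \twonorm{U_n}{}\twonorm{R_n}{} = o_p(\sqrt g)$ and $\twonorm{R_n}{2} = o_p(1)$. Because the denominator of $W_{bs}$ is of order $\sqrt g$, both contributions vanish in probability after normalization, and Slutsky's theorem, using $\fSigmagg \overset{p}{\to} \fSigmagg^*$ to absorb the random scaling, combines the CLT for the leading quadratic form with the negligible remainder to yield $W_{bs} \overset{d}{\rightarrow} N(0,1)$.
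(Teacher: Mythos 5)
Your proposal is correct and follows essentially the same route as the paper: the paper's proof amounts to invoking Theorem~\ref{thm.normal} for the asymptotic normality of $\sqrt{n}\,\hbetag$, recording the centering $\sigma^2\tr{\fSigmagg}$ and variance $2\sigma^4\text{tr}\{(\fSigmagg^*)^2\}$, and then citing Bai and Saranadasa (1996) for the CLT --- which is precisely the weighted-$\chi^2$ Lyapunov argument you carry out explicitly via the quadratic form $\fepsilon^T A\fepsilon$. Your write-up is in fact more complete than the paper's four-line sketch, and you correctly flag the one step both treatments leave unestablished, namely the spectral lower bound $\tr{\fSigmagg^2}=\Omega(g)$ (equivalently $\mu_{\max}^2/\tr{\fSigmagg^2}\to 0$) needed for the Lyapunov condition, which does not follow from the stated assumption that $\fSigmagg$ converges to $\fSigmagg^*$ alone.
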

The proof of Theorem \ref{thm.wald} is given in Appendix~\ref{proof.thm.wald}.

\section{Simulation studies}
\label{sec.simulation}
Essentially, we want to use simulations to compare our MOCE to popular LDP methods proposed by~\cite{zhangcun2014} for their performances on inference.

\subsection{Setup}
We simulate 200
datasets according to the following setup:
\[
  y = \fX\fbetao + \fepsilon, \quad \fepsilon=(\epsilon_i, \dots,
  \epsilon_n)^T, \quad \epsilon_i \overset{i.i.d.}{\sim} N(0, \sigma^2),  \quad
  i=1, \dots, n,
\]
where $\sigma = 2\sqrt{\frac{a}{n}}$, and the signal set $\mA$ is formed by a randomly sampled subset of $\{1,\cdots,p\}$ where the $a$ signal parameters are generated
from the uniform distribution $U(0.05, 0.6)$, while the rest
of null signal parameters are all set at 0. Each row of the design matrix $\fX$ is
simulated by a $p$-variate normal distribution $N(\fzero, 0.5\fR(\alpha))$,
where $\fR(\alpha)$ is a first-order autoregressive correlation matrix with
correlation parameter $\alpha$.  Each of the $p$ columns is normalized to satisfy
$\ell_2$-norm 1.

Three metrics are used to evaluate inferential performance for individual
parameters from the signal set $\mA$ and the non-signal set
$\mA^c$, separately. They include bias (Bias), coverage
probability (CP), and asymptotic standard error (ASE):
\begin{align*}
  \begin{split}
\text{Bias}_{\mA} &= \frac{1}{a} \sum_{j \in \mA}(\bE \hat \beta_j - \beta_{j}^*), \quad
	\text{Bias}_{\mA^c} = \frac{1}{p-a} \sum_{j \in \mA^c}(\bE \hat \beta_j -
        \beta_{j}^*),\\
\text{ASE}_{\mA} &= \frac{1}{a} \sum_{j \in \mA}\sqrt{\bV \text{ar}(\hat
\beta_{j})}, \quad 	\text{ASE}_{\mA^c} = \frac{1}{p-a} \sum_{j \in
\mA^c}\sqrt{\bV \text{ar}(\hat \beta_{j})},\\
        \text{CP}_{\mA}(\eta) & = \frac{1}{a} \sum_{j \in \mA} \mathbbm{1}\{\beta_{j}^* \in
        \text{CI}_j(\eta) \}, \quad 	\text{CP}_{\mA^c}(\eta) = \frac{1}{p-a} \sum_{j \in \mA^c}
        \mathbbm{1}\{0 \in \text{CI}_j(\eta) \},
  \end{split}
  \label{eq.metric}
\end{align*}
where $\bE \hat \beta_j$ is the expectation of $\hat \beta_j$, $\bV
\text{ar}(\hat \beta_j)$ is the asymptotic variance of $\hat \beta_j$, and
$\text{CI}_j(\eta)$ denotes the confidence interval for $\beta_{j}^*$ derived from
$\bV\text{ar}(\hat \beta_j)$ under the confidence level $1-\eta$, where
$\eta \in(0,1)$. The above metrics are estimated by their sample counterparts over 200 simulation replicates.

The LASSO estimator $\hbeta_{\lambda}$ is calculated by the R package \textit{glmnet}
with tuning parameter $\lambda$ selected by a 10-fold cross validation, where an estimate of the
variance parameter $\sigma^2$ is given by
\begin{equation}
	\hat \sigma^2 = \frac{1}{n - \hat a} \twonorm{\fy - \fX\hbeta_{\lambda}}{2},
	\label{eq.sigma}
\end{equation}
where $\hat a$ is the number of nonzero entries in the LASSO estimator
$\hbeta_{\lambda}$. It is shown in \cite{2013reid} that the above estimator $\hat
\sigma^2$ in \eqref{eq.sigma} is robust against changes in signal sparsity and
strength.

For MOCE, we
set $\ftau_a=\tau_a\fI$ and $\ftau_c=\tau_c\fI$ where $\tau_a=10^{-8}\sqrt{\log
p}/n$ and $\tau_c =10^{-4}\sqrt{\rho_{max}^{+}(\fSaa)
\rho_{max}^{+}(\fScc)}$, respectively.
Such difference between $\tau_a$ and $\tau_c$ is set according to Theorem
\ref{thm.normal}, reflecting the basic idea of MOCE on different tuning mechanisms with respect to
signals and non-signals. The size of expanded model $\mhA$, $\tilde{a}$, is determined by
$\lambda_a = C\min\{1/\sqrt{\hat a n}, \lambda_s \}$, where the constant
$C$ is between 4 and 12.
The competing LDP estimator proposed by \citet{zhangcun2014}, denoted
by $\hbetatau{LDP}$, is implemented by the R package \textit{hdi} with the
initial estimate obtained from the scaled LASSO.

\subsection{Inference on individual parameters}
We compare inferential performance between MOCE and LDP for 1-dimensional parameters. Consider the  following
scenarios:  $n=200$,  $p \in \{200,400,600\}$, $a=3$ and $\alpha \in \{0, 0.3\}$.

Table \ref{tab5} reports sample counterparts of Bias,
ASE, coverage probabilities for significance level 0.01 (CP99), 0.05 (CP95),
and 0.10 (CP90) over 200 rounds of simulations. First, clearly the oracle model always exhibits the best performance
among the three methods. In the oracle case, because the values of null signal parameters are known to be zero, their coverage probabilities are indeed always 1.  For the comparison between the other two
methods, Table~\ref{tab5} shows that the MOCE method outperforms
LDP method with the coverage probabilities much closer to the nominal levels regardless of correlation $\alpha =0$ or $\alpha=0.3$. Such an improvement by the MOCE method is due to the fact that MOCE uses different lengths of confidence
intervals to cover nonzero and zero parameters. It is noted that the MOCE
method has larger variances for non-null signal parameters in $\mA$ than those for null signal
parameters in $\mA^c$, confirming the theoretical result stated in Corollary \ref{coro.2}. On
the contrary, estimated variances for both signal and null signal parameters in the
LDP method are very similar.  According to \citet{vandegeer2014}, the LDP method tends
to optimize the global coverage of all parameters, making no differences between signals
or null signals, subject to the aim of achieving the overall shortest confidence
intervals for all parameters.  Reflecting to this strategy of optimality, the LDP method
typically produces standard errors for all parameters in the same order of
magnitude, and consequently the resulting standard errors for signal parameters are often
underestimated, whereas the standard errors for null signal parameters are overestimated.

\setlength{\tabcolsep}{10pt} 
\renewcommand{\arraystretch}{1.5} 
\begin{sidewaystable}[!htbp]
 \setlength{\tabcolsep}{4pt}
 \caption{ Summary statistics of Bias, ASE and coverage probability for inference in individual parameters based on Oracle, MOCE and LDP
 over 200 rounds of simulations. \label{tab5}}
 \centering
  \begin{tabular}{lccccccc|ccccc|ccccc}
\hline
&   &  & \multicolumn{5}{c}{Oracle} & \multicolumn{5}{c}{MOCE} &
\multicolumn{5}{c}{LDP}\\ \cline{4-18}
 $\alpha$ & $p$ &  & Bias & ASE & CP99 & CP95 & CP90 & Bias & ASE & CP99 & CP95 & CP90 & Bias & ASE & CP99 & CP95 & CP90 \\
\hline
\multirow{6}{*}{0} &
\multirow{2}{*}{200 } &
 $\mA$   &  0.000& 0.017& 0.982& 0.935& 0.882&  0.001& 0.033& 0.990& 0.940& 0.890& -0.002& 0.017& 0.957& 0.908& 0.852\\
& &
 $\mA^c$ &  0.000& 0.000& 1.000& 1.000& 1.000&  0.000& 0.026& 0.989& 0.950& 0.899&  0.000& 0.017& 0.970& 0.931& 0.883\\
&
\multirow{2}{*}{400 } &
 $\mA$   &  0.000& 0.017& 0.995& 0.953& 0.887& -0.001& 0.035& 0.987& 0.950& 0.888& -0.001& 0.018& 0.983& 0.947& 0.882\\
& &
 $\mA^c$ &  0.000& 0.000& 1.000& 1.000& 1.000&  0.000& 0.022& 0.991& 0.952& 0.902&  0.000& 0.018& 0.986& 0.945& 0.895\\
&
\multirow{2}{*}{600 } &
 $\mA$   & -0.001& 0.017& 0.983& 0.945& 0.903& -0.004& 0.054& 0.975& 0.937& 0.873& -0.003& 0.018& 0.982& 0.912& 0.865\\
& &
 $\mA^c$ &  0.000& 0.000& 1.000& 1.000& 1.000&  0.000& 0.018& 0.991& 0.953& 0.904&  0.000& 0.018& 0.990& 0.950& 0.899\\
\hline
\multirow{6}{*}{0.3} &
\multirow{2}{*}{200 } &
 $\mA$   &  0.000& 0.017& 0.987& 0.947& 0.910&  0.001& 0.041& 0.988& 0.943& 0.883& -0.002& 0.018& 0.952& 0.913& 0.857\\
& &
$\mA^c$  &  0.000& 0.000& 1.000& 1.000& 1.000&  0.000& 0.034& 0.991& 0.950& 0.902&  0.000& 0.018& 0.950& 0.913& 0.863\\
&
\multirow{2}{*}{400 } &
 $\mA$   &  0.000& 0.017& 0.985& 0.942& 0.878& -0.002& 0.036& 0.985& 0.953& 0.895& -0.002& 0.018& 0.973& 0.918& 0.865\\
& &
 $\mA^c$ &  0.000& 0.000& 1.000& 1.000& 1.000&  0.000& 0.023& 0.991& 0.953& 0.903&  0.000& 0.018& 0.985& 0.946& 0.896\\
&
\multirow{2}{*}{600 } &
 $\mA$  & -0.001& 0.017& 0.988& 0.953& 0.893& -0.004& 0.056& 0.987& 0.952& 0.910& -0.003& 0.018& 0.985& 0.943& 0.895 \\
& &
 $\mA^c$&  0.000& 0.000& 1.000& 1.000& 1.000&  0.000& 0.019& 0.991& 0.953& 0.905&  0.000& 0.018& 0.990& 0.949& 0.899 \\
\hline

  \end{tabular}
\end{sidewaystable}

Another difference between MOCE and LDP methods is computational efficiency.
Table \ref{tab5.1} reports the average computation time in one randomly selected replicate. It is evident that the MOCE method is several hundred times faster than the LDP method in all six scenarios considered in the simulation study. This is the numerical evidence confirming the theoretical computational complexity discussed in Section \ref{lab.comp}; the computational complexity for MOCE and LDP are $O(n(p-\tilde
a)^2)$ and $O(2np^2)$, respectively, for $p$ fixed
tuning parameters in the node-wise LASSO. In practice, the node-wise LASSO needs to be
calculated along a solution path with varying tuning parameters, which, with no doubt, will
dramatically increase LDP's computational cost.

 \setlength{\tabcolsep}{10pt} 
\renewcommand{\arraystretch}{1.5} 
\begin{table}[!htbp]
 \setlength{\tabcolsep}{4pt}
 \caption{Average computation time in one simulated dataset for MOCE and LDP methods. \label{tab5.1} }
 \centering
  \begin{tabular}{cccc}
\hline
&     & \multicolumn{2}{c}{Computation Time (seconds)} \\
$\alpha$ & $p$ &    MOCE & LDP\\
\hline
\multirow{3}{*}{0}
& 200 & 0.313& 228.362\\
& 400 & 0.636& 260.250\\
& 600 & 1.393& 418.376\\
\hline
\multirow{3}{*}{0.3}
& 200 & 0.345& 224.270\\
& 400 & 0.642& 249.809\\
& 600 & 1.286& 357.721\\
\hline
  \end{tabular}
\end{table}

\subsection{Simultaneous Test for a group of parameters}
\label{sub.sign}
In this second simulation study we assess the performance of \citet{bai1996}'s
test $W_{bs}$ defined in \eqref{test.newwald} for a group of parameters in comparison
to the classical Wald statistic $W_1$ given in \eqref{test.wald}. Under the same
setting of the above simulation study,
we consider a hypothesis $H_0: \fbetag = \fzero \ \text{vs} \ H_a: \fbetag \neq
\fzero$, where the size of $\mG$ is set at 5, 50 and 100.  We also consider varying different size of intersection $\mG \cap
\mAo$. When $|\mG \cap \mAo|=0$, the null hypothesis $H_0$ is true; otherwise
the alternative hypothesis $H_a$ is the case.

Empirical type I errors and power are computed under the significance level
0.05 over 200 replications. Since the asymptotic distribution of the Wald statistic \eqref{test.wald}
is constructed under the assumption that $\frac{g}{n}\rightarrow 0$ as $n
\rightarrow \infty$, we expect that $W_1$ would work well for the low-dimensional case
$|\mG|=5$ when $p$ is not too large but fails to control type I errors
when either $|\mG|$ or $p$ is large.

Table \ref{tab3} summarizes both empirical type I errors and power of $W_1$ and $W_{bs}$
based on 200 replications, where $|\mG\cap \mAo|=0$ and $|\mG\cap \mAo|>0$
correspond to type I error and power, respectively. When $|\mG|=5$ and $p = 200, 400$, the
Wald statistic $W_1$ is able to reasonably control the type I error, and appears to have
comparable power to $W_{bs}$. When $|\mG|=50, 100$ and $p = 600$, $W_1$ fails to control type
I errors properly that are much lower than 0.05 level. This implies that $W_1$ is too conservative
for simultaneous inference in high-dimensional setting.  In contrast, the proposed statistic $W_{bs}$ has clearly demonstrated
proper control of type I error and satisfactory power in all these cases.

\setlength{\tabcolsep}{10pt} 
\renewcommand{\arraystretch}{1.5} 
\begin{table}[!htbp]
 \setlength{\tabcolsep}{4pt}
 \caption{\label{tab3} Empirical type I error and power of the classical Wald statistics
	 $W_1$ and the proposed $W_{bs}$ over 200 replications under AR-1
       correlated predictors with correlation $\alpha=0$ and $\alpha=0.3$. }
 \centering
  \begin{tabular}{ccccccccc}
  \hline
  & & &  \multicolumn{2}{c}{$p=200$} &  \multicolumn{2}{c}{$p=400$} & \multicolumn{2}{c}{$p=600$} \\
\hline
 $\alpha$ &$|\mG|$ & $|\mG\cap \mAo|$  & $W_{bs}$ & $W_{1}$ & $W_{bs}$ & $W_{1}$ & $W_{bs}$ & $W_{1}$ \\
\hline
\multirow{9}{*}{0} &
\multirow{3}{*}{5}
  &0 &0.045& 0.045& 0.050& 0.025& 0.040& 0.035\\
& &2 &1.000& 1.000& 1.000& 1.000& 1.000& 1.000\\
& &3 &1.000& 1.000& 1.000& 1.000& 1.000& 1.000\\
\cline{2-9}
&
\multirow{4}{*}{50}
  &0 &0.035& 0.045& 0.050& 0.045& 0.040& 0.005\\
& &2 &0.990& 1.000& 1.000& 1.000& 0.975& 0.970\\
& &3 &1.000& 1.000& 1.000& 1.000& 1.000& 1.000\\
\cline{2-9}
&
\multirow{4}{*}{100}
  &0 &0.040& 0.020& 0.065& 0.020& 0.030& 0.000\\
& &2 &0.920& 1.000& 0.980& 0.995& 0.800& 0.565\\
& &3 &1.000& 1.000& 1.000& 1.000& 1.000& 0.995\\
\hline
\multirow{9}{*}{0.3} &
\multirow{3}{*}{5}
  &0 &0.055& 0.050& 0.045& 0.050& 0.055& 0.040\\
& &2 &1.000& 1.000& 1.000& 1.000& 0.980& 0.980\\
& &3 &1.000& 1.000& 1.000& 1.000& 1.000& 1.000\\
\cline{2-9}
&
\multirow{4}{*}{50}
  &0 &0.060& 0.075& 0.040& 0.055& 0.050& 0.010\\
& &2 &0.810& 1.000& 1.000& 1.000& 0.700& 0.645\\
& &3 &1.000& 1.000& 1.000& 1.000& 1.000& 1.000\\
\cline{2-9}
&
\multirow{4}{*}{100}
  &0 &0.055& 0.050& 0.050& 0.010& 0.040& 0.000\\
& &2 &0.520& 1.000& 0.955& 0.995& 0.405& 0.275\\
& &3 &0.970& 1.000& 1.000& 1.000& 0.930& 0.980\\
\hline

  \end{tabular}
\end{table}

\section{Discussion}
\label{sec.discussion}
We developed a new method of contraction and expansion (MOCE) for simultaneous
inference in the high-dimensional linear models. Different from the existing
low dimensional projection (LDP) method, in MOCE we propose a step of model
expansion with a proper expansion order, so that the model
selection uncertainty due to the LASSO tuning parameter is well controlled and
asymptotically ignorable in comparison to the sampling uncertainty. It is
notoriously hard to quantify model selection uncertainty in the regularized estimation procedure with variable selection. The proposed step of model
expansion overcomes this difficulty; instead of quantifying it analytically, our MOCE method controls and reduces
it asymptotically in comparison to the level of sampling uncertainty. Thus, the MOCE method provides a realistic solution to valid
simultaneous post-model selection inferences. We have thoroughly discussed the issue of determining
the size of expanded model and established as a
series of theorems to guarantee the validity of the MOCE method.
We showed both analytically and numerically that the MOCE method gives better
control of type I error and much faster computation than the existing LDP method. In addition,
a new test $W_{bs}$ provides an appealing approach to a simultaneous test for a group of parameters, with
a much better performance than the classical Wald test.

Another useful technique in the MOCE pertains to a ridge-type
shrinkage, which is imposed not only to enjoy computational speed but also to incorporate
different lengths of confidence intervals for signal and null signal parameters. It is worth noting that our MOCE method attempts to provide an adaptive construction of confidence interval to signal strength, instead of signal sparsity as proposed by \cite{cai2017}. The optimality studied in \cite{cai2017} might offer an opportunity to develop a desirable tuning procedure for the ridge $\ftau$-matrix, which is certainly an interesting future research direction. In this paper, we focus on the study of asymptotic orders of tuning parameters, where we
propose a tuning parameter selection rate
$\sqrt{2\log p}/n$ for the selection of expanded model. In effect, as suggested in our
theoretical work, asymptotical normality can be established at a rate of $\sqrt{2\log p}/n^{1+\delta}$ for any $\delta \geq 0$. Thus, we conjecture that $\sqrt{2\log p}/n$ is the
lower bound of the legitimate rate for a proper expanded model. In other
words, a rate lower than $\sqrt{2\log p}/n$ would hamper the model
selection uncertainty from being asymptotically ignorable with respect to the sampling
uncertainty. This is an important theoretical question worth further
exploration. As suggested by one of the reviewers, it is also interesting to use the magnitude of $\kappa_j, j = 1, \dots, p$ in the KKT condition to determine an expanded model, which is worthy further exploration.

An interesting direction of research on MOCE is to understand its potential
connection to elastic-net \citep{zou05}. Because both MOCE and elastic-net
perform a combined regularization via $\ell_1$-norm and $\ell_2$-norm,
there might exist a certain connection between these two approaches;
unveiling such relationship may points to a new direction of future research.

In summary, the new key contributions of MOCE that make the method useful in real-world applications include (i) confidence interval constructed by  MOCE has different lengths
for signal and null-signal parameters, and consequently MOCE can satisfactorily control
type I error; and (ii) MOCE enjoys fast computation and
scalability under less stringent regularity conditions.  Note that MOCE only involves two additional tuning
parameters $\tau_a$ and $\tau_c$ in a ridge-type regularization, while existing methods such as LDP,
bootstrap sampling and sample splitting method all involve substantially
computational costs.

\section*{Acknowledgements}
We thank the editor, the associate editor, two anonymous referees for comments that have led to significant improvements of the manuscript. This research is supported by a National Institutes of Health grant R01ES024732
and a National Science Foundation grant DMS1513595. We are grateful to Dr.
Cun-Hui Zhang for providing an R code for the low-dimensional projection (LDP)
method.  Drs. F. Wang and L. Zhou are the co-first author of this paper.

\renewcommand{\thethm}{{\sc A.\arabic{theorem}}}
\renewcommand{\thelemma}{A.\arabic{lemma}}\setcounter{lemma}{0}
\renewcommand{\thesection}{{A.\arabic{section}}}

\renewcommand{\theequation}{{A.\arabic{equation}}}
\renewcommand{\thesubsection}{{\it A.\arabic{subsection}}}
\setcounter{equation}{0}
\setcounter{subsection}{0}

\section*{Appendices}

\subsection{Proof of Lemma \ref{lemma.fnormt}}
\label{proof.fnormt}
\begin{proof}

Let $\fS = \fU \fD \fU^T$ be the singular value decomposition of $\fS$,
whose singular values are arranged in
$\fD = \text{diag}\{\rho_{(1)}, \dots, \rho_{(m)}, 0, \dots, 0\}$ with $\rho_{(1)} \geq \dots \geq \rho_{(m)} > 0 = \rho_{(m+1)} = \dots = \rho_{(p)}$. Let  $\ftau^{-1/2}\fS\ftau^{-1/2} = \fU_1\fD\fU_1^T$ be the singular value decomposition of $\ftau^{-1/2}\fS\ftau^{-1/2}$. 
Denote $\fU = \ftau^{1/2} \fU_1$. Then we have $\ftau = \fU\fU^{T}$ and $\fS = \fU\fD\fU^{T}$. By some simple calculations we obtain
\begin{align*}
\fnormt{\ifSigmatau{\ftau}\ftau}{2} =
		&\text{tr}\left\{(D + I)^{-1}(\fU^{T} \fU)^{-1} (D + I)^{-1} \fU^{T}\fU \right\}\\
		=&\sum_{j=1}^p \frac{1}{(\rho_{(j)}+1)^2}
		\leq \max(p-n, 0) + \frac{\min(n, p)}{(\rho_{(m)} + 1)^2},
	\end{align*}
	where the second equality holds due to the equation $[(\fU^{T} \fU)^{-1} (D + I)^{-1} \fU^{T}\fU]_{jj} = \frac{1}{\rho_{(j)}+1}$. Here $[A]_{jj}$ denotes the $j$th diagonal element of matrix $A$. Likewise,
	\begin{align*}
          \fnormt{\ifSigmatau{\ftau} \ftau}{2}
	  &\geq 
	  \max(p-n, 0) + \frac{\min(n, p)}{(\rho_{(1)} + 1)^2}.
	\end{align*}	
        By combining the above two inequalities, the first inequality with the
        Frobenius norm of part (ii) follows. Now we turn to the proof of the
        second inequality.
	By Theorem 4.3.1 in \cite{horn2012}, we know
	\[
          \xi + \rhomin{\ftau} \leq \rhomin{\fSigmatau{\ftau}} \leq
          \rhomax{\fSigmatau{\ftau}} \leq \rhomax{\fS} + \rhomax{\ftau},
	 \]
         where $\xi=0$ if $p>n$ and $\xi = \rhomin{\fS}$ if $p\leq n$. It
         follows immediately that
	\[
          \frac{1}{\rhomax{\fS} + \rhomax{\ftau}} \leq \rhomin{\ifSigmatau{\ftau}} \leq \rhomax{\ifSigmatau{\ftau}} \leq
	  \frac{1}{\xi + \rhomin{\ftau}}.
	\]
	Since $\ifSigmatau{\ftau}$ is positive definite, the largest element
	of $\ifSigmatau{\ftau}$ always occurs on its main diagonal, equal to
	$\maxnorm{\ifSigmatau{\ftau}} =\underset{1\leq i \leq p}{\max}\fe_i^T
	\ifSigmatau{\ftau}\fe_i$, which satisfies
	\[
          \frac{1}{\rhomax{\fS} + \rhomax{\ftau}} \leq \underset{1\leq j \leq
          p}{\max}\fe_j^T \ifSigmatau{\ftau}\fe_j \leq \frac{1}{\xi +
          \rhomin{\ftau}},
	\]
        where $\fe_1,\dots,\fe_p$ are the standard basis of Euclidean $\bR^p$
        space.  Because diagonal matrix $\ftau \succ 0$ (positive-definite),
        \[
          \maxnorm{\ifSigmatau{\ftau}\ftau} \leq \maxnorm{\ifSigmatau{\ftau}}\maxnorm{\ftau}
	  \leq \frac{\rhomax{\ftau}}{\xi + \rhomin{\ftau}} =
	  \begin{cases}
            \frac{\rhomax{\ftau}}{\rhomin{\ftau}}, & \text{if $p>n$}; \\
            \frac{\rhomax{\ftau}}{\rhomin{\fS} + \rhomin{\ftau}}, & \text{if $p\leq n$},
	  \end{cases}
        \]
	and
        \[
          \maxnorm{\ifSigmatau{\ftau}\ftau}
	  \geq \frac{\rhomin{\ftau}}{\rhomax{\fS} + \rhomax{\ftau}}.
        \]
	Then the inequality in part (ii) for the $\infty$-norm follows.
\end{proof}

\iffalse
\subsection{Proof of Lemma \ref{lemma.ac}}
\label{proof.lemma.ac}
\begin{proof}
	First, we show \eqref{ineq.m12}. With $\tilde a<n$, the $\hLpaa{}$ is a matrix
	of rank $\tilde a$ containing a nonsingular $\tilde a \times \tilde a$
	submatrix of $\fS +\tau_a\fI$. Then, the interlacing inequality
	\citep{thompson19721} implies \eqref{lemma.a4}, that is
	\[
		\rhomin{\fSpsiaa } + \tau_a \leq \rhomin{\hLpaa{}}\leq \rhomax{\hLpaa{}} \leq
	\rhomax{\fSpsiaa } + \tau_a.
	\]
	
	We now show \eqref{ineq.v.lower}. With similar arguments, it follows
	that $\rhomax{\Psic
		(\fS  + \tau_c \fI)\Psic}\leq \rhomax{\fSpsicc} + \tau_c$ by the
	interlacing inequality.  If $p -\tilde a > n$, then the $(p -\tilde a) \times (p
	-\tilde a)$ submatrix of $\fS$ contained in $\hLpcc{}$ is singular. Therefore,
	$\rhomin{\Psic (\fS  + \tau_c \fI)\Psic}\geq \tau_c$ in \eqref{lemma.a5} follows.
	
	To verify \eqref{ineq.m12}, we calculate term $(\fapa)^TM_{12}\fapc$ as follows:
	\begin{equation*}
	\begin{split}
	&(\fapa)^TM_{12}\fapc \\
	=& (\fapa)^T(\hLpaa{-1}\fSpsiac \hLpcc{-1} - \hLpaa{-1} \fSpsiaa \hLpaa{-1} \fSpsiac \hLpcc{-1})\fapc \\
	=& (\fapa)^T(\hLpaa{-1} - \hLpaa{-1} \fSpsiaa \hLpaa{-1}) \fSpsiac \hLpcc{-1}\fapc\\
	=& \tau_a (\fapa)^T\hLpaa{-1} \hLpaa{-1} \fSpsiac \hLpcc{-1}\fapc \\
	\leq& \tau_a \twonorm{(\fapa)^T\hLpaa{-1} \hLpaa{-1} }{} \twonorm{\fSpsiac \hLpcc{-1}\fapc}{} \\
	\leq& \tau_a \rhomin{\fSpsiaa} ^{-2} \rhomax{\fSpsica\fSpsiac}  \tau_c^{-2}\\
	\leq& \tau_a b_0^{-2}  \tau_c^{-2}\rhomax{\fSpsica}^2,
	\end{split}
	\end{equation*}
	where the second inequality is the result of applying \eqref{lemma.a4} and
	\eqref{lemma.a5}, and the last one is the result of assumption \ref{con.aa1}.
	Inequality \eqref{ineq.v.lower} can be easily verified by assumption
	\ref{con.aa1} and $\fa \in \mM_{c_0}$.
	
	To show \eqref{ineq.v}, notice that
	\begin{align*}
	&d^T \fL^{-1}\fS(\fL^{-1})^T d \\
	=& (\fapa)^TM_{11}\fapa + (\fapc)^TM_{22}\fapc + 2(\fapa)^T M_{12} \fapc\\
	\leq& (\fapa)^T \hLpaa{-1} \fSpsiaa \hLpaa{-1}\fapa + (\fapc)^T \hLpcc{-1} \fSpsicc \hLpcc{-1}\fapc \\
	&+2  \tau_a \tau_c^{-2} (\rhomin{\fSpsiaa} + \tau_a)^{-2} \rhomax{\fSpsica\fSpsiac}\\
	\leq& \rhomin{\fSpsiaa} ^{-2}\rhomax{\fSpsiaa}  + \tau_c^{-2}\rhomax{\fSpsicc} \\
	&+2  \tau_a \tau_c^{-2} \rhomin{\fSpsiaa} ^{-2} \rhomax{\fSpsica\fSpsiac}\\
	\leq& b_0^{-2}b_1 + \tau_c^{-2}\rhomax{\fSpsicc} + 2 \tau_a \tau_c^{-2}b_0^{-2}\rhomax{\fSpsica}^2.
	\end{align*}
	Finally, we show \eqref{lemma.a3}. Using the partition defined in \eqref{notation.b}, we obtain $ \twonorm{
		\Psic\fbetao}{} = \twonorm{\fbetaobb}{} \leq (\supnorm{\fbetaobb})
                |\mB_{fn}|^{1/2}$. Assumption \ref{con.c1} implies the discrete random variable
                $|\mB_{fn}|$ converges to 0 in probability. Thus, \eqref{lemma.a3} follows because
	for any $\delta>0$,
	\begin{align*}
	\bP\big(\twonorm{ \Psic\fbetao}{} >n^{-\gamma} \delta \big ) &\leq
        \bP\big(\supnorm{\fbetaobb} |\mB_{fn}|^{1/2} > n^{-\gamma} \delta \big)
        \leq \bP(|\mB_{fn}|\geq 1) \rightarrow 0.
	\end{align*}
\end{proof}

\subsection{Proof of Theorem \ref{theorem.1}}
\label{proof.theorem.1}
\begin{proof}
	Let $\fX^T = (\fx_{1+},\dots,\fx_{n+})$ and $w_i =
	n^{-1/2}v^{-1}\ifSigmatau{\ftau}\fx_{i+}$. Then,
	\begin{align}
	n^{1/2} v^{-1}\fa^T(\hbetatau{\ftau} - \fbetao) = T_1 + T_2,
	\label{decompose.12}
	\end{align}
	where $T_1= n^{1/2} v^{-1}\fa^T \ifSigmatau{\ftau} \ftau (\hbeta - \fbetao)$
	and  $T_2 = \sum_{i=1}^n\fa^T w_i \epsilon_i$. It is easy to show by
	condition \ref{con.a0} that $T_2 \sim N(0,1)$ and
	\begin{equation}
	\label{t2.cov}
	\text{cov}(T_2) = v^{-2} \sigma^2 \fa^T \ifSigmatau{\ftau}(\frac{1}{n} \sum_{i=1}^n \fx_{i+} \fx_{i+}^T) \ifSigmatau{\ftau} \fa = 1.
	\end{equation}
	For $T_1$, we obtain
	\begin{align}
		\label{eq:v}
		v^2 \geq \sigma^2\{\rhomax{\ifSigmatau{\ftau}}\}^{-2}\rhomin{\fS} \geq \sigma^2 b_1^{-2}b_0>0.
	\end{align}
Substituting the inequality (\ref{eq:v}) into $T_1$, we obtain
\begin{align*}
		|T_1| & \leq n^{1/2} v^{-1}\onenorm{\fa}{} \supnorm{ \ifSigmatau{\ftau} \ftau(\hbeta - \fbetao)}{} \\
		&\leq \sigma^{-1} n^{1/2}p^{1/2}c^{-1}(\rhomax{\fS} + \tau)\maxnorm{\ifSigmatau{\ftau} \ftau} \onenorm{\hbeta - \fbetao}{} \\
		& \leq 4\phi_0^{-2}\sigma^{-1} n^{1/2}p^{1/2}c^{-1}\frac{\rhomax{\fS} + \tau}{\rhomin{\fS} + \tau}\tau a\lambda\\
		& \leq 4\phi_0^{-2}\sigma^{-1} n^{1/2}p^{1/2}c^{-1}\frac{\rhomax{\fS} + \tau}{c + \tau}\tau a\lambda\\
		& =O\left(n^{1/2}p^{1/2}\rhomax{\fS}\tau\right)o_p(1)= o_p(1),
\end{align*}
where the third inequality comes from Theorem 6.1 in \cite{Buhlmann2011} and
the result (ii) of Lemma~\ref{lemma.fnormt} by using condition \ref{con.a3}.
Moreover, the last equation follows from conditions \ref{con.a2} and
\ref{con.a3}, $\tau = O\left(n^{-1/2} p^{-1/2}\rhomax{\fS}^{-1}\right)$ and
$\lambda = O(\log p/n)^{1/2}$. Thus, Theorem~\ref{theorem.1} follows.
	
\end{proof}

\subsection{Proof of Theorem \ref{thm.normal}}
\label{proof.thm.normal}
\begin{proof}
	We decompose $n^{1/2} v^{-1}\fa^T(\hbetatau{\ftau} - \fbetao)$ into five components
	\begin{align*}
	&n^{1/2} v^{-1}\fa^T(\hbetatau{\ftau} - \fbetao) \\
	=& n^{1/2} v^{-1}\fa^T(\fI-\hL^{-1}\fS)(\hbeta-\fbetao) + n^{1/2} v^{-1}\fa^T\frac{1}{n}\hL^{-1}\fX^T \fepsilon \\
	=&n^{1/2} v^{-1}\fa^T(\Psi + \Psic-\hLpaa{-}\fS - \hLpcc{-}\fS + \hLpcc{-}\hLpca{}\hLpaa{-}\fS)(\hbeta-\fbetao) \\
	&+ n^{-1/2} v^{-1}\fa^T \hL^{-1}\fX^T \fepsilon \\
	\overset{def}{=} & T_{1} + T_{2}+T_{3} + T_{4}  + T_{5},
	\end{align*}
	where
	\begin{align*}
	T_{1} =& n^{1/2} v^{-1}\fa^T(\Psi -\hLpaa{-}\fS ) (\Psi\hbeta-\Psi\fbetao), \\
	T_{2} =& n^{1/2} v^{-1}\fa^T(\Psi -\hLpaa{-}\fS ) (\Psic\hbeta-\Psic\fbetao), \\
	T_{3} =& n^{1/2} v^{-1}\fa^T(\Psic - \hLpcc{-}\fS + \hLpcc{-}\hLpca{}\hLpaa{-}\fS)(\Psi\hbeta-\Psi\fbetao), \\
	T_{4} =& n^{1/2} v^{-1}\fa^T(\Psic - \hLpcc{-}\fS + \hLpcc{-}\hLpca{}\hLpaa{-}\fS)(\Psic\hbeta-\Psic\fbetao),\\
	T_{5} =& n^{-1/2} v^{-1}\fa^T\hL^{-1}\fX^T \fepsilon.
	\end{align*}
	First, Lemma \ref{lemma.ac} implies $v = O(1)$.
	By the definition of $\Psi$ and $\mhA$, it is easy to know
	$\Psic\hbeta = \fzero$ and $\Psi \Psic = \fzero$.  And \eqref{lemma.a4} implies
	$\twonorm{\hLpaa{-}}{} \leq (\rhomin{\Psi\fS\Psi} + \tau_a)^{-1}<b_0^{-1}$. When $\tilde a\leq n$  we know $\maxnorm{\Psi
		-\hLpaa{-} \Psi\fS\Psi}=O(\tau_a)$ according to
	Lemma \ref{lemma.fnormt}. Thus, applying the same arguments in Theorem
	\ref{theorem.12}, we obtain
	\begin{align*}
	|T_{1}| &= n^{1/2} v^{-1}|\fa^T(\Psi -\hLpaa{-}\fS ) (\Psi\hbeta-\Psi\fbetao)|\\
	&=n^{1/2} v^{-1}|(\fa^T\Psi + \fa^T \Psic)(\Psi -\hLpaa{-}\Psi\fS \Psi - \hLpaa{-}\Psi \fS \Psic)(\Psi\hbeta-\Psi\fbetao)| \\
	&=n^{1/2} v^{-1}|\fa^T\Psi (\Psi -\hLpaa{-}\Psi\fS\Psi)(\Psi\hbeta-\Psi\fbetao)| \\
	&\leq  n^{1/2} v^{-1} \onenorm{\Psi \fa}{}\maxnorm{\Psi -\hLpaa{-} \Psi\fS\Psi} \onenorm{\Psi\hbeta-\Psi\fbetao}{}\\
	&\leq n^{1/2} \tilde a^{1/2} O(\tau_a)O_p(a \sqrt {\frac{\log p}{n}})
	= o_p(1),
	\end{align*}
	provided assumption $\tau_a = O(n^{-1/2}\tilde a^{-1/2})$. The order in
	\eqref{lemma.a3} in Lemma \ref{lemma.ac} gives
	\begin{align*}
	|T_{2}| =& n^{1/2} v^{-1}|\fa^T(\Psi -\hLpaa{-}\fS ) \Psic\fbetao| \\
	=& n^{1/2} v^{-1}|\fa^T(\hLpaa{-} \Psi\fS \Psic) \Psic\fbetao|\\
	\leq& n^{1/2} v^{-1}\twonorm{\hLpaa{-}}{}\twonorm{\hLpca{T}}{} \twonorm{\Psic\fbetao}{} \\
	\leq& n^{1/2} v^{-1}b_0^{-1} \twonorm{\hLpca{T}}{} \twonorm{\Psic \fbetao}{}
	= o_p(1),
	\end{align*}
	where the last inequality follows from \eqref{lemma.a3}.
	For $T_{3}$,
	\begin{align*}
	|T_{3}| =& n^{1/2} v^{-1}|\fa^T(\Psic - \hLpcc{-}\fS + \hLpcc{-}\hLpca{}\hLpaa{-}\fS)(\Psi\hbeta-\Psi\fbetao)|, \\
	=& n^{1/2} v^{-1} |\fa^T (\Psic  - \hLpcc{-} \hLpca{}
	+ \hLpcc{-}\hLpca{}\hLpaa{-} \Psi\fS \Psi )(\Psi\hbeta-\Psi\fbetao)| \\
	=& n^{1/2} v^{-1}|\fa^T \hLpcc{-} \hLpca{} (\Psi -\hLpaa{-}\Psi\fS\Psi) (\Psi\hbeta-\Psi\fbetao)| \\
	\leq & n^{1/2} v^{-1} \onenorm{\fa^T  \hLpcc{-} \hLpca{} }{} \maxnorm{\Psi -\hLpaa{-}\Psi\fS\Psi}\onenorm{\Psi\hbeta-\Psi\fbetao}{} \\
	\leq & n^{1/2} v^{-1}\tilde a ^{1/2} \twonorm{\hLpcc{-} }{} \twonorm{\hLpca{}}{} \maxnorm{\Psi -\hLpaa{-}\Psi\fS\Psi}\onenorm{\Psi\hbeta-\Psi\fbetao}{}\\
	\leq & n^{1/2} \tilde a ^{1/2} \tau_c^{-1} \twonorm{\hLpca{}}{} O(\tau_a) \onenorm{\Psi\hbeta-\Psi\fbetao}{} \\
	= & O_p(n^{1/2}\tilde a^{1/2} \tau_c^{-1} \rhomax{\hLpca{}}{}\tau_a a \sqrt{\frac{\log p}{n}}) = o_p(1),
	\end{align*}
	where $\onenorm{\fa^T  \hLpcc{-} \hLpca{} }{} \leq \tilde a^{1/2}
	\twonorm{\fa^T  \hLpcc{-} \hLpca{} }{} \leq \tilde a^{1/2} \tau_c^{-1} \twonorm{\hLpca{}}{}$ since $\fa^T  \hLpcc{-} \hLpca{}$
	is a $1 \times p$ vector with $\tilde a$ nonzero elements specified by $\Psi$.
	
	For $T_{4}$, Lemma \ref{lemma.ac} implies
	\begin{align*}
	|T_{4}| =& n^{1/2} v^{-1} |\fa^T(\Psic - \hLpcc{-}\fS + \hLpcc{-}\hLpca{}\hLpaa{-}\fS)(\Psic\hbeta-\Psic\fbetao)|\\
	=& n^{1/2} v^{-1} |\fa^T(\Psic - \hLpcc{-} \Psic\fS\Psi - \hLpcc{-} \Psic\fS\Psic \\
	&+ \hLpcc{-}\hLpca{}\hLpaa{-} \Psi\fS \Psi + \hLpcc{-}\hLpca{}\hLpaa{-} \Psi\fS \Psic )(\Psic\hbeta-\Psic\fbetao)|\\
	=& n^{1/2} v^{-1}|\fa^T(\Psic  - \hLpcc{-} \Psic\fS\Psic
	+ \hLpcc{-}\hLpca{}\hLpaa{-} \Psi\fS \Psic )\Psic\fbetao|\\
	=& n^{1/2} v^{-1} \twonorm{ \fa^{T}\hLpcc{-}(\Psic \tau_c + \hLpca{}\hLpaa{-} \hLpca{T})}{} \twonorm{\Psic\fbetao}{}\\
	\leq& n^{1/2} \tau_c^{-1}O(\tau_c + b_0^{-1} \rhomax{\hLpca{}}^2) \twonorm{\Psic\fbetao}{} \\
	= & n^{1/2} O(b_0^{-1}\tau_c\rhomax{\hLpca{}}^2) \twonorm{\Psic\fbetao}{}
	= o_p(1).
	\end{align*}
	
	Let $T_{5,*} =n^{-1/2} v_*^{-1}\fa^T  \hLs{-1} \fX^T \fepsilon$. Since
	$T_5-T_{5,*} = o_p(1)$ by assumption \ref{con.c1} and \ref{con.c2}, assumption \ref{con.a0} implies
	$T_{5,*}$ follows a standard normal distribution $N(0,1)$.
\end{proof}

\fi

\subsection{Proof of Corollary \ref{coro.2}}
\label{proof.coro2}
\begin{proof}
  Using similar arguments in Lemma \ref{lemma.fnormt}, we know the minimal
  variance of estimator $\hbetaat{\ftaua}$ satisfies
  \[\underset{1\leq i \leq a_*}{\min} \fe_i^T \ifSigmaaa \fSaa \ifSigmaaa \fe_i  \geq \rhomin{\ifSigmaaa \fSaa \ifSigmaaa }
	\geq \frac{\rhomin{\fSaa}}{(\rhomin{\fSaa} + \tau_a)^2}.
  \]
  It is easy to verify that
  \begin{align*}
    \sigma^2 \ifSigmacc \fScc \ifSigmacc
    \succ \sigma^2 \big [\fL^{-1} \fS (\fL^{-1})^T \big ]_{\mhA^c\mhA^c}.
  \end{align*}
  Consequently, we can prove the result by assessing the diagonal
  entries of $\ifSigmacc \fScc \ifSigmacc$.  The maximal variance of estimator
  $\hbetact{\ftauc}$ is bounded by
  \begin{align*}
  \underset{1\leq i \leq p-a_*}{\max} (\fe_i^{\bot})^T \ifSigmacc \fScc \ifSigmacc \fe_i^{\bot}
  \leq \rhomax{\ifSigmacc \fScc \ifSigmacc}
  \leq \frac{\rhomax{ \fScc}}{\tau_c^2}.
  \end{align*}
  Therefore, assumptions for $\tau_a$ and $\tau_c$ in Theorem \ref{thm.normal}
  imply
  \begin{align*}
    \underset{1\leq i \leq a_*}{\min} \fe_i^T \ifSigmaaa \fSaa \ifSigmaaa \fe_i
    \geq \frac{c_1}{ \rhomin{\fSaa } }
    \geq  \frac{c_2}{\rhomax{\fSaa}} \geq \\
    \underset{1\leq i \leq p-a_*}{\max} (\fe_i^{\bot})^T \ifSigmacc
    \fScc  \ifSigmacc \fe_i^{\bot},
  \end{align*}
  where $c_1$ and $c_2$ are two positive constants.

\end{proof}

\subsection{Proof of Corollary \ref{lab.coro2}}
\label{lab.proof.coro2}
\begin{proof}
  Assumptions \ref{assp.2} and \ref{lab.assp.7} and conditions for $\tau_a$ and
  $\tau_c$ imply that
  on $\mhA$ there exists
  \begin{align*}
	  \twonorm{\hbetaat{\ftaua}- \fbetaat{}}{} &\leq \twonorm{\ifSigmaaa
    }{} \frac{1}{n} \twonorm{\fX_{\mhA}^T\fepsilon}{} + \twonorm{r_a}{} \\
	  &\leq \twonorm{\ifSigmaaa }{}  \sqrt {\tilde a} \frac{1}{ n} \supnorm{
    \fX_{\mhA}^T\fepsilon}{} + o_p(1/\sqrt n)\\
     &\leq \frac{ O_p(\sqrt {\tilde a \log \tilde a/n})}
	  {\lambda_{\min}(\tilde a) + \tau_a} + o_p(1/\sqrt n) = O_p(\sqrt {\tilde a \log \tilde a/n}).
  \end{align*}
  Similarly on $\mhA^c$, based on the same assumptions, we obtain
  \begin{align*}
    \twonorm{\hbetact{\ftauc} - \fbetact}{} \leq& \twonorm{\ifSigmacc
    }{} \frac{1}{n} \twonorm{\fX_{\mhA^c}^T\fepsilon - \fSca \ifSigmaaa
    \fX_{\mhA}^T\fepsilon}{}  + \twonorm{r_c}{} \\
    = & \twonorm{\ifSigmacc}{} \frac{1}{n} \twonorm{\fX_{\mhA^c}^T(\fI_n -
        \frac{1}{n} \fX_{\mhA} \ifSigmaaa
	  \fX_{\mhA}^T)\fepsilon}{}  + o_p(1/\sqrt n)\\
    \leq & \twonorm{\ifSigmacc}{} \frac{\lambda_{\max}(\tilde a)}{
    \lambda_{\max}(\tilde a) + \tau_a} \sqrt{ p - \tilde a}\frac{1}{n} \supnorm{
    \fX_{\mhA^c}^T\fepsilon}{} + o_p(1/\sqrt n)\\
    \leq & \frac{1}{\tau_c} \frac{\lambda_{\max}(\tilde a)}{
	    \lambda_{\max}(\tilde a) + \tau_a} \sqrt{(p - \tilde a)\log
    (p-\tilde a)/n}  + o_p(1/\sqrt n)\\
	  =& o_p(\max \{1/\sqrt n, \sqrt{(p - \tilde a)\log
    (p-\tilde a)/n}/ \tau_c\}).
  \end{align*}
\end{proof}

\subsection{Proof of Theorem \ref{thm.wald}}
\label{proof.thm.wald}
\begin{proof}
Let $M_n = n\hbetag^T \hbetag - \sigma^2\tr{\fSigmagg}$.  Theorem
\ref{thm.normal} implies that $\sqrt n (\hbetag-\fbetag)
\overset{d}{\rightarrow} N(0, \sigma^2\fSigmagg^*)$, which further indicates $E
M_n\rightarrow 0$ given assumptions in Theorem \ref{thm.wald}. Furthermore, we
can verify that $\text{var}(M_2) =2 \sigma^4 \text{tr}\{  (\fSigmagg^*)^2\}$.
Applying the same arguments given by \citet{bai1996}, we can show $W_{bs}$
converges in distribution to $N(0,1)$ as $n \rightarrow \infty$.
\end{proof}


\bibliographystyle{imsart-nameyear}
\bibliography{References_5}

\end{document}